\documentclass[]{onecat}

\usepackage[utf8]{inputenc}
\usepackage{caption}
\usepackage{hyperref}
\usepackage{cleveref}
\usepackage{verbatim}
\usepackage{amsmath}
\usepackage{amssymb}
\usepackage{amsfonts}
\usepackage{amsthm}
\usepackage{mathtools}
\usepackage{graphicx}
\usepackage{subcaption}
\usepackage{listings}
\usepackage{algorithm}
\usepackage{makecell}
\usepackage{multirow}
\usepackage{tabularx}
\usepackage{array}
\usepackage{textcomp}
\usepackage{enumitem}
\usepackage{wrapfig}
\usepackage{tikz}
\usepackage{marvosym}
\usepackage{pifont}
\usepackage{minitoc}
\newcommand{\dflab}{D\textsuperscript{4} Lab}

\usetikzlibrary{arrows.meta,positioning,fit,calc}

\definecolor{PromptFrame}{HTML}{C99700}
\definecolor{PromptTitle}{HTML}{D9A600}
\definecolor{PromptBack}{HTML}{FFF4BF}
\definecolor{SpecFrame}{HTML}{4F7CAC}
\definecolor{SpecTitle}{HTML}{D9EAF7}
\definecolor{SpecBack}{HTML}{F5FAFF}
\definecolor{SchemaFrame}{HTML}{5F6B7A}
\definecolor{SchemaBack}{HTML}{F7F8FA}
\definecolor{CaseFrame}{HTML}{6B7280}
\definecolor{CaseBack}{HTML}{F8FAFC}
\definecolor{CaseTitle}{HTML}{E5E7EB}
\definecolor{GoodBack}{HTML}{ECFDF5}
\definecolor{BadBack}{HTML}{FEF2F2}
\definecolor{NeutralBack}{HTML}{F9FAFB}
\definecolor{llwinbg}{RGB}{226,245,232}
\definecolor{lllossbg}{RGB}{252,228,228}
\definecolor{llneutralbg}{RGB}{245,245,245}
\definecolor{basegray}{RGB}{247,247,247}
\definecolor{droporange}{RGB}{255,236,220}
\definecolor{recoverteal}{RGB}{225,243,239}
\definecolor{qualgreen}{RGB}{225,243,239}
\definecolor{qualred}{RGB}{255,236,220}
\definecolor{qualgray}{RGB}{244,244,244}

\newcommand{\llwin}[1]{\cellcolor{llwinbg}{#1}}
\newcommand{\llloss}[1]{\cellcolor{lllossbg}{#1}}
\newcommand{\llneutral}[1]{\cellcolor{llneutralbg}{#1}}

\providecommand{\qpos}[1]{\cellcolor{qualgreen}\textbf{#1}}

\providecommand{\qnear}[1]{\cellcolor{qualgray}{#1}}

\newcommand{\pp}{\,\text{pp}}

\NewDocumentCommand{\tyh}{ mO{} }{\textcolor{blue}{\textsuperscript{\textit{tyh}}\textsf{\textbf{\small[#1]}}}}
\NewDocumentCommand{\shy}{ mO{} }{\textcolor{red}{\textsuperscript{\textit{shy}}\textsf{\textbf{\small[#1]}}}}

\newcolumntype{Y}{>{\raggedright\arraybackslash}X}

\theoremstyle{definition}
\newtheorem{assumption}{Assumption}[section]
\newtheorem{lemma}{Lemma}[section]
\newtheorem{proposition}{Proposition}[section]

\newtheorem{corollary}{Corollary}[section]

\lstdefinestyle{promptstyle}{
  basicstyle=\ttfamily\scriptsize,
  breaklines=true,
  breakatwhitespace=false,
  columns=fullflexible,
  keepspaces=true,
  showstringspaces=false,
  tabsize=2,
  upquote=true
}

\lstdefinelanguage{json}{
  basicstyle=\ttfamily\small,
  breaklines=true,
  morecomment=[l]{//},
  morestring=[b]",
  stringstyle=\color{red!70!black},
  commentstyle=\color{gray!70!white},
  literate=
    *{0}{{{\color{gray}{0}}}}1
     {1}{{{\color{gray}{1}}}}1
     {:}{{{\color{black}{:}}}}1
     {,}{{{\color{black}{,}}}}1
     {\{}{{{\color{blue}{\{}}}}1
     {\}}{{{\color{blue}{\}}}}}1
     {[}{{{\color{blue}{[}}}}1
     {]}{{{\color{blue}{]}}}}1
}

\tcbuselibrary{listingsutf8}

\newtcolorbox{casebox}[2][]{%
  enhanced,
  breakable,
  sharp corners,
  colback=CaseBack,
  colframe=CaseFrame,
  colbacktitle=CaseTitle,
  coltitle=black,
  fonttitle=\bfseries,
  title={#2},
  left=1.2mm,
  right=1.2mm,
  top=1mm,
  bottom=1mm,
  #1
}

\newtcblisting{promptlisting}[2][]{%
  enhanced,
  breakable,
  sharp corners,
  colback=PromptBack,
  colframe=PromptFrame,
  colbacktitle=PromptTitle,
  coltitle=black,
  fonttitle=\bfseries,
  title={#2},
  listing only,
  listing options={style=promptstyle},
  left=1mm,
  right=1mm,
  top=1mm,
  bottom=1mm,
  #1
}

\newtcolorbox{specbox}[2][]{%
  enhanced,
  breakable,
  sharp corners,
  colback=SpecBack,
  colframe=SpecFrame,
  colbacktitle=SpecTitle,
  coltitle=black,
  fonttitle=\bfseries,
  title={#2},
  left=1.2mm,
  right=1.2mm,
  top=1mm,
  bottom=1mm,
  #1
}

\newtcblisting{schemalisting}[2][]{%
  enhanced,
  breakable,
  sharp corners,
  colback=SchemaBack,
  colframe=SchemaFrame,
  colbacktitle=SchemaFrame!20,
  coltitle=black,
  fonttitle=\bfseries,
  title={#2},
  listing only,
  listing options={style=promptstyle},
  left=1mm,
  right=1mm,
  top=1mm,
  bottom=1mm,
  #1
}

\title{Rethinking Local Learning: A Cheaper and Faster Recipe for LLM Post-Training}

\author{%
\parbox{\textwidth}{\centering
Hengyu Shi$^{1*}$, Tianyang Han$^{2*}$, Peizhe Wang$^{3*}$ \\ [0.5em]
Zhiling Wang$^{1\ddagger}$, Xu Yang$^{2}$, Junhao Su$^{1\ddagger\dagger}$\\
}
}

\affiliation{%
\parbox{\textwidth}{\centering\small
$^1$Independent Researcher, \quad $^2$\dflab, \quad $^3$Southeast University
}}

\contribution[*]{Equal contribution}
\contribution[\ddagger]{Corresponding author}
\contribution[\dagger]{Project leader}

\abstract{
LLM post-training typically propagates task gradients through the full depth of the model. Although this end-to-end structure is simple and general, it couples task adaptation to full-depth activation storage, long-range backward dependencies and direct task-gradient access to pretrained representations. We argue that this full-depth backward coupling can be unnecessarily expensive and intrusive, particularly when post-training supervision is much narrower than pre-training. To this end, we propose \textbf{LoPT}: \textbf{Lo}cal-Learning \textbf{P}ost-\textbf{T}raining, a simple post-training strategy that makes gradient reach an explicit design choice.
LoPT places a single gradient boundary at the transformer midpoint: the second-half block learns from the task objective, while the first-half block is updated by a lightweight feature-reconstruction objective to preserve useful representations and maintain interface compatibility. LoPT shortens the task-induced backward path while limiting direct interference from narrow task gradients on early-layer representations. Extensive experiments demonstrate that LoPT achieves competitive performance with lower memory cost, higher training efficiency and better retention of pretrained capabilities. Our code is available at: \url{https://github.com/HumyuShi/LoPT}
}

\date{\today}

\begin{document}

\maketitle

\section{Introduction}
\label{sec:introduction}

Post-training has become the primary route for adapting pretrained large language models (LLMs) into practically useful systems. Methods such as supervised fine-tuning (SFT), direct preference optimization (DPO) and recent reinforcement-learning variants substantially shape model behavior after pre-training~\citep{ouyang2022training,rafailov2024direct,shao2024deepseekmath}. Although these approaches differ in their supervision signals, training objectives, and optimization procedures, they often share an underlying optimization pattern in which task gradients are propagated end-to-end through the full depth of the model. The resulting full-depth gradient pathway is simple and general, but it is usually treated as a fixed implementation default rather than an explicit design choice in post-training.


Prior work has mainly focused on which objectives should be optimized, which parameters should be updated and how training cost can be reduced through systems and optimization techniques. While these directions are important, they do not fully characterize the computational burden of post-training. Even when updates are restricted to a subset of parameters, the task loss may still induce a backward graph that spans the full transformer stack. Post-training in this setting must therefore retain activations across layers, maintain long-range backward dependencies and assign credit through the entire model depth. The cost and behavior of post-training are thus governed not only by the number of trainable parameters but also by the reach of task gradients, which we identified as how far the task loss is allowed to propagate through the model.

The importance of gradient reach is amplified in post-training, where supervision is often much narrower than the data and objectives used during pre-training. 
A post-training dataset may emphasize a particular response format, task family, or reward signal, while standard end-to-end training still allows the resulting task gradients to propagate through the entire model. This leads to full-depth backward coupling, in which task adaptation remains tied to full-depth activation storage, long-range credit assignment, and direct gradient updates to representations learned from broad pre-training data. 
Such coupling can be unnecessarily costly when the desired behavioral change is concentrated in task-facing layers, and unnecessarily intrusive when narrow task gradients perturb lower- and middle-layer representations that support general capabilities~\citep{han2025beyond}.

\begin{figure}[t]
    \centering
    \includegraphics[width=\linewidth]{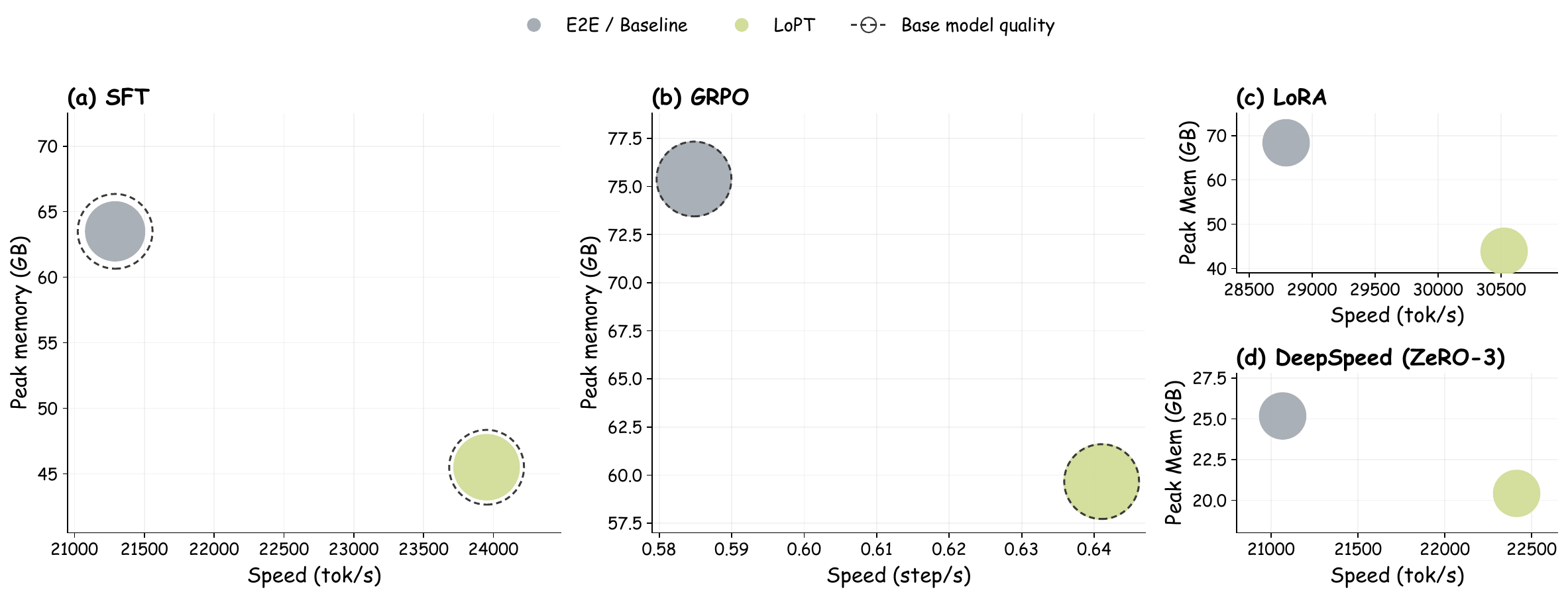}
    \caption{\textbf{LoPT improves the quality--efficiency trade-off in post-training.}
    Panels (a) and (b) report Qwen2.5-7B-Instruct results for SFT and GRPO. Bubble size denotes the seven-benchmark average quality, and the dashed ring denotes base-model quality.
    LoPT reduces peak memory and improves training speed while improving SFT quality and maintaining near-parity GRPO quality.
    Panels (c) and (d) report matched-stack LoRA and DeepSpeed-ZeRO3 profiling in absolute throughput and peak-memory units, matching the appendix tables.
    Quality values are averaged over three independent runs with different random seeds, and efficiency values are averaged over three measured runs.}
    \label{fig:teaser}
\end{figure}


Supervised local-learning methods offer a potential solution to this dilemma by introducing local auxiliary objectives that shorten the backpropagation path, thereby mitigating disruption to the model’s shallow-layer features ~\citep{su2024momentum,zhang2025mlaan,su2026manpp}. 
Motivated by the need to control task-gradient reach during LLM post-training, we introduce \textbf{Local-Learning Post-Training (LoPT)}, a simple local-learning-style post-training scheme for LLMs. LoPT partitions the transformer into a first-half block \(k_1\) and a second-half block \(k_2\) by inserting a single \texttt{detach()} boundary at the midpoint. The second-half block is optimized with the standard task objective, whereas the first-half block is shielded from direct task-gradient propagation and is trained via gradients obtained from a lightweight feature-reconstruction objective. The local objective does not aim to solve the downstream task independently. Instead, it keeps the first-half block trainable, encouraging it to preserve useful pretrained representations while maintaining compatibility with the adapting second-half block. Therefore, LoPT focuses on controlling the reach of task gradients rather than altering the number of trainable parameters. While LoRA-style methods modify the parameterization of adaptation, LoPT instead controls how far task gradients are allowed to propagate through the model.

Specifically, by stopping task-gradient propagation at the midpoint, LoPT shortens the backward path induced by the post-training objective, reducing the need to retain activations for backpropagation and weakening long-range backward dependencies. This allows the first-half block to remain trainable through its local objective, offering more flexibility than simply freezing early layers while limiting interference from narrow task gradients. As a result, LoPT decouples task-facing adaptation in the second-half block from the preservation of lower-level representations maintained through the first-half block. The second-half block specializes in the post-training objective, while the first-half block is encouraged to preserve a stable and useful interface. Importantly, gradient isolation works orthogonally to parameter-efficient tuning and system optimizations, and it can be naturally combined with techniques such as LoRA, gradient checkpointing, DeepSpeed/ZeRO, and pipeline parallelism. Figure~\ref{fig:teaser} summarizes the resulting quality--efficiency trade-off.


Our main contributions are summarized as follows.
\begin{itemize}
\item We revisit the default pattern of full-depth task-gradient propagation in LLM post-training and formulate gradient reach as an underexplored design variable. By focusing on how far task gradients propagate, we connect training memory, optimization complexity, and interference with pretrained representations as different consequences of full-depth backward coupling.

\item We propose \textbf{Local-Learning Post-Training (LoPT)}, a simple post-training scheme that localizes task-gradient propagation with a single midpoint boundary. LoPT optimizes the second-half block with the task objective while keeping the first-half block trainable through a lightweight feature-reconstruction objective.

\item Extensive experimental results demonstrate that LoPT delivers competitive performance across both SFT and GRPO post-training regimes. Beyond task performance, LoPT reduces training memory and improves training efficiency with better preservation of pretrained capabilities.
\end{itemize}

We discuss related work on local learning and memory-efficient LLM post-training in Appendix~\ref{sec:related}.

\section{Related Work}
\label{sec:related}

\subsection{Local learning.}
Local learning has been studied as an alternative to standard end-to-end backpropagation by replacing a single global objective with local training signals. A central motivation is to reduce backward dependencies and alleviate locking phenomena, where earlier modules must wait for later modules to complete forward and backward computation before being updated. Decoupled neural interfaces and synthetic gradients address this issue by predicting gradient information from local activations~\citep{jaderberg2017decoupled}. Supervised local learning further partitions deep networks into blocks or layers trained with auxiliary objectives, reducing activation storage and enabling more localized updates~\citep{belilovsky2019decoupled,nokland2019training}. 
Recent methods improve local learning with stronger auxiliary losses and better cross-block communication, narrowing the performance gap with end-to-end training~\citep{su2024momentum,su2024hpff,zhang2025mlaan,zhu2024advancing}. Despite these advances, local learning has mostly been studied as a general vision training alternative, while its role as a mechanism for controlling task-gradient reach in LLM post-training remains underexplored.

\subsection{Memory-efficient LLM post-training.}
Memory-efficient post-training has been widely studied through algorithmic changes to how large language models are adapted. Parameter-efficient tuning methods reduce the number of trainable parameters by updating only small task-specific modules or low-rank adapters, including adapters, prefix tuning, prompt tuning, and LoRA~\citep{houlsby2019parameter,li2021prefix,lester2021power,hu2022lora}. Quantized fine-tuning further lowers memory usage by combining low-rank adaptation with low-bit pretrained weights, as in QLoRA~\citep{dettmers2024qlora}. More recent optimizer-efficient methods reduce training memory from the optimization side, for example by projecting gradients into low-rank subspaces or reducing optimizer-state overhead~\citep{zhao2024galore}. These methods mainly reduce memory by changing which parameters are updated, how adaptation is parameterized, or how optimizer states are represented. In contrast, our work concentrates on how far task gradients should be allowed to propagate through the pretrained model during post-training.

\section{Method}
\label{sec:method}

We first formalize the conventional end-to-end post-training setup in Section~\ref{sec:method_prelim}, then introduce the local-learning perspective and the stop-gradient boundary underlying LoPT in Section~\ref{sec:method_local_learning}. In Section~\ref{sec:method_lopt}, we instantiate LoPT for supervised fine-tuning (SFT) and Group Relative Policy Optimization (GRPO).

\subsection{Preliminaries}
\label{sec:method_prelim}

Consider an autoregressive transformer~\citep{vaswani2017attention} with an embedding layer, $N$ transformer layers, a final normalization layer, and an output head. Given an input token sequence $\mathbf{x}$, we denote its embedding sequence by $\mathbf{h}_0=\texttt{Embed}(\mathbf{x})$. We split the transformer at layer $s$ into two consecutive blocks, a first block \(k_1\) and a second block \(k_2\):
\begin{equation}
\begin{aligned}
    k_1 &= \{\text{Layer}_0,\ldots,\text{Layer}_{s-1}\}, \\
    k_2 &= \{\text{Layer}_{s},\ldots,\text{Layer}_{N-1}, \texttt{norm}, \texttt{lm\_head}\}.
\end{aligned}
\label{eq:model_partition}
\end{equation}
Let $\theta_1$ and $\theta_2$ be the parameters of $k_1$ and $k_2$. The forward computation can be written as:
\begin{equation}
    \mathbf{h}_1=f_{k_1}(\mathbf{h}_0;\theta_1),
    \qquad
    \mathbf{y}=f_{k_2}(\mathbf{h}_1;\theta_2),
\end{equation}
where $\mathbf{h}_1$ is the intermediate representation passed from the first block to the second block, and $\mathbf{y}$ denotes the output logits.

Under standard end-to-end post-training, the above split is only notational. The task loss $\mathcal{L}_{\mathrm{task}}$ is backpropagated through the full model by:
\begin{equation}
\begin{aligned}
    \theta_1 &\leftarrow \theta_1-\eta\nabla_{\theta_1}\mathcal{L}_{\mathrm{task}},
    &
    \nabla_{\theta_1}\mathcal{L}_{\mathrm{task}}
    &=
    \frac{\partial \mathcal{L}_{\mathrm{task}}}{\partial \mathbf{y}}
    \frac{\partial \mathbf{y}}{\partial \mathbf{h}_1}
    \frac{\partial \mathbf{h}_1}{\partial \theta_1},
    \\
    \theta_2 &\leftarrow \theta_2-\eta\nabla_{\theta_2}\mathcal{L}_{\mathrm{task}},
    &
    \nabla_{\theta_2}\mathcal{L}_{\mathrm{task}}
    &=
    \frac{\partial \mathcal{L}_{\mathrm{task}}}{\partial \mathbf{y}}
    \frac{\partial \mathbf{y}}{\partial \theta_2}.
\end{aligned}
\label{eq:e2e_update}
\end{equation}

The key property of E2E post-training is therefore that the task objective directly supervises both blocks through a full-depth backward path.

\begin{figure}[t]
    \centering
    \includegraphics[width=0.95\linewidth]{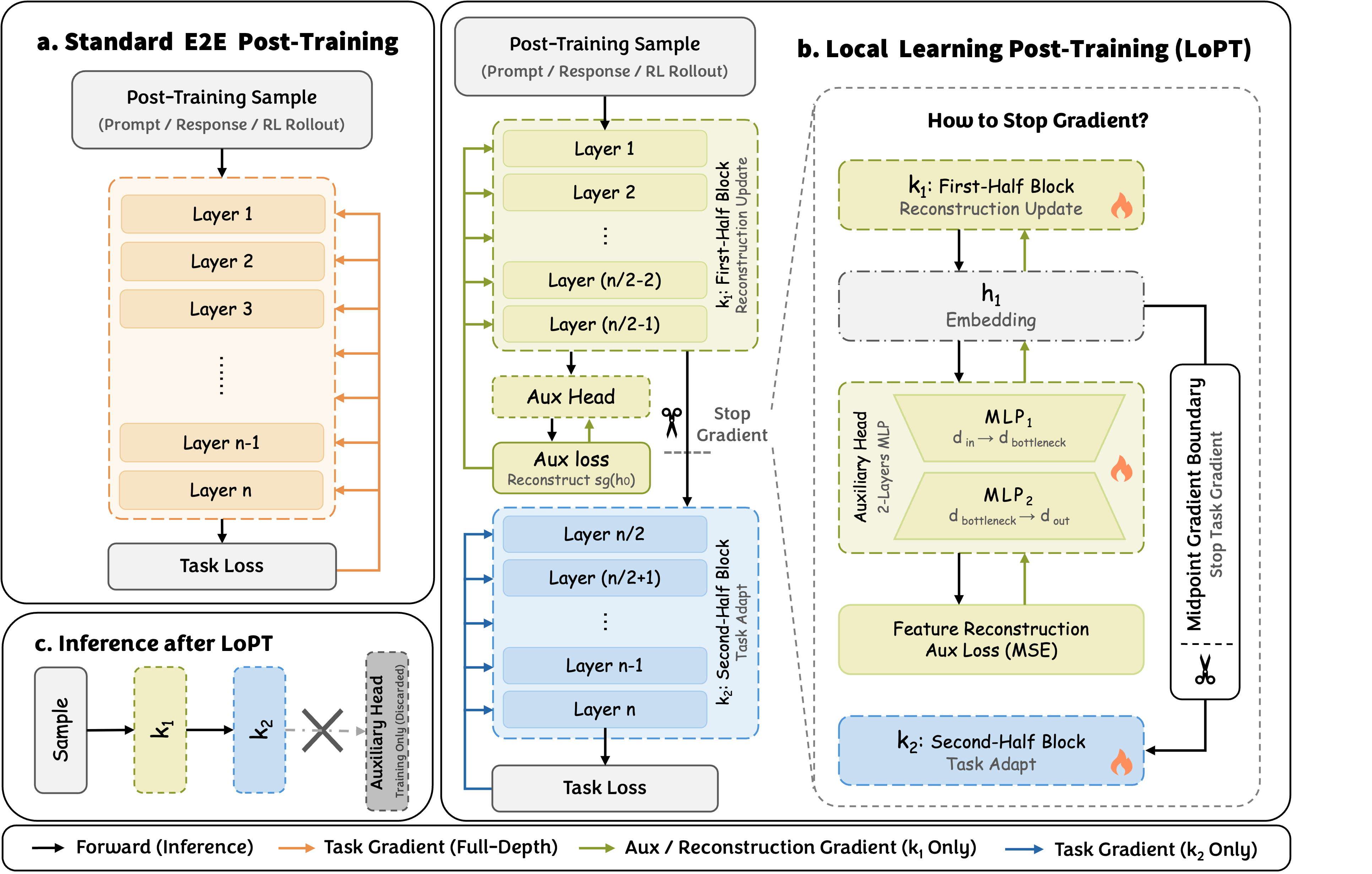}
    \caption{\textbf{Overview of Local-Learning Post-Training (LoPT).}
    (a) Standard E2E post-training propagates task gradients through the full transformer stack.
    (b) LoPT inserts a midpoint stop-gradient boundary: \(k_2\) is updated by the task loss, while \(k_1\) remains trainable through a local feature-reconstruction objective implemented with a lightweight auxiliary head.
    (c) The auxiliary head is used only during training and discarded at inference, so inference follows the standard \(k_1 \rightarrow k_2\) forward path without extra computation.}
    \label{fig:lopt_overview}
\end{figure}

\subsection{Local-Learning View of Gradient Reach}
\label{sec:method_local_learning}

In contrast to standard end-to-end (E2E) post-training, LoPT adopts a local-learning perspective: instead of allowing the task objective to directly supervise every layer, it restricts task-level supervision to the part of the model primarily responsible for task adaptation. As illustrated in Fig.~\ref{fig:lopt_overview}, the first-half block still participates in the same forward computation, but is no longer updated directly by the task loss.

Concretely, LoPT inserts a stop-gradient boundary between $k_1$ and $k_2$ by:
\begin{equation}
    \mathbf{h}_1=f_{k_1}(\mathbf{h}_0;\theta_1),
    \qquad
    \hat{\mathbf{h}}_1=\mathrm{sg}(\mathbf{h}_1),
    \qquad
    \mathbf{y}=f_{k_2}(\hat{\mathbf{h}}_1;\theta_2),
\end{equation}
where $\mathrm{sg}(\cdot)$ denotes the stop-gradient operator, acting as the identity in the forward pass while setting the backward Jacobian to zero. 

The task loss therefore remains a function of the representation produced by the first-half block, while its gradient is prevented from reaching the first-half-block parameters:
\begin{equation}
\begin{aligned}
    \nabla_{\theta_1}\mathcal{L}_{\mathrm{task}}
    &=
    \frac{\partial \mathcal{L}_{\mathrm{task}}}{\partial \mathbf{y}}
    \frac{\partial \mathbf{y}}{\partial \hat{\mathbf{h}}_1}
    \underbrace{\frac{\partial \hat{\mathbf{h}}_1}{\partial \mathbf{h}_1}}_{=\mathbf{0}}
    \frac{\partial \mathbf{h}_1}{\partial \theta_1}
    =
    \mathbf{0},
    \\
    \nabla_{\theta_2}\mathcal{L}_{\mathrm{task}}
    &=
    \frac{\partial \mathcal{L}_{\mathrm{task}}}{\partial \mathbf{y}}
    \frac{\partial \mathbf{y}}{\partial \theta_2}.
\end{aligned}
\label{eq:lopt_task_grad}
\end{equation}
Thus, the task-induced backward path terminates at the boundary. Since the boundary alone would make the first-half block passive with respect to training, LoPT assigns it a local auxiliary objective:
\begin{equation}
    \mathcal{L}_{\mathrm{local}}
    =
    \mathcal{L}_{\mathrm{local}}(\mathbf{h}_0,\mathbf{h}_1;\phi),
\end{equation}
where $\phi$ denotes auxiliary parameters. $\mathcal{L}_{\mathrm{local}}$ is not meant to solve the downstream task independently; instead, it provides a representation-preserving signal that keeps the first-half block plastic while maintaining an interface compatible with the adapting second-half block. The parameter update is:
\begin{equation}
\begin{aligned}
    (\theta_1,\phi)
    &\leftarrow
    (\theta_1,\phi)
    -
    \eta_1
    \nabla_{\theta_1,\phi}
    \lambda_{\mathrm{local}}\mathcal{L}_{\mathrm{local}},
    \\
    \theta_2
    &\leftarrow
    \theta_2
    -
    \eta_2
    \nabla_{\theta_2}\mathcal{L}_{\mathrm{task}}.
\end{aligned}
\label{eq:lopt_update}
\end{equation}
By separating the two update paths, LoPT optimizes the second-half block for the post-training objective while keeping the first-half block trainable through representation maintenance. In contrast to E2E post-training, task gradients no longer propagate through the full model depth.

\subsection{Local Learning for LLM Post-Training}
\label{sec:method_lopt}

LoPT controls the reach of task supervision by changing the backward path while leaving the post-training objective unchanged. To define the gradient boundary, we use the same partition as in Eq.~\ref{eq:model_partition} and set
\begin{equation}
    s=\left\lfloor \frac{N}{2} \right\rfloor .
\end{equation}
After this choice, $k_1$ and $k_2$ correspond to the first-half and second-half blocks, respectively. The midpoint boundary is used as a simple architecture-agnostic default, avoiding split-location tuning while making gradient reach comparable across model families. For architectures with tied input embeddings and output heads, we treat the two modules as distinct parameter tensors when defining the split, preventing output-head gradients from bypassing the LoPT boundary and updating the input embedding.

\paragraph{Training objectives of LoPT.}
LoPT keeps the standard supervised post-training objective unchanged in form, but changes how its gradients are routed through the model. The task branch receives the boundary representation produced by the first-half block while stopping the task gradient at the boundary. Formally, we compute
\begin{equation}
    \mathbf{h}_1=f_{k_1}(\mathbf{h}_0),
    \qquad
    \hat{\mathbf{h}}_1=\mathrm{sg}(\mathbf{h}_1).
\end{equation}
For supervised fine-tuning, we use the standard causal language-modeling loss over the non-padding prediction positions in the formatted instruction-output sequence:
\begin{equation}
\mathcal{L}_{\mathrm{SFT}}
=
-\frac{1}{|\mathcal{T}|}
\sum_{t\in\mathcal{T}}
\log p_{\theta_2}
\left(
x_{t+1}\mid x_{\leq t};\, \hat{\mathbf{h}}_1
\right),
\label{eq:lopt_sft}
\end{equation}
where $\mathcal{T}$ denotes the set of non-padding prediction positions used for the SFT loss. The forward prediction still depends on both halves of the model because $\hat{\mathbf{h}_1}$ is produced by $k_1$, but for the SFT objective it is treated as a constant in backpropagation:
\begin{equation}
    \nabla_{\theta_1}\mathcal{L}_{\mathrm{SFT}}=\mathbf{0},
    \qquad
    \nabla_{\theta_2}\mathcal{L}_{\mathrm{SFT}}\not\equiv\mathbf{0}.
\end{equation}

For GRPO~\citep{shao2024deepseekmath}, given a prompt $\mathbf{x}$, sampled responses $\{o_i\}_{i=1}^{G}$, and normalized advantages $\hat{A}_i$, the current policy log-probability is computed through the same stop-gradient boundary:
\begin{equation}
\begin{aligned}
\mathbf{h}_{1,i,t}
&=
f_{k_1}(\mathbf{h}_{0,i,t};\theta_1), \\
\hat{\mathbf{h}_{1,i,t}}
&=
\mathrm{sg}(\mathbf{h}_{1,i,t}), \\
\ell^{\mathrm{LoPT}}_{i,t}
&=
\log \pi_{\theta_2}
\left(
o_{i,t}\mid \mathbf{x},o_{i,<t};\,
\hat{\mathbf{h}_{1,i,t}}
\right).
\end{aligned}
\end{equation}
The token-level probability ratio and clipped GRPO loss are computed as:
\begin{equation}
\begin{aligned}
r_{i,t}
&=
\exp\left(
\ell^{\mathrm{LoPT}}_{i,t}
-
\log \pi_{\mathrm{old}}
\left(
o_{i,t}\mid \mathbf{x},o_{i,<t}
\right)
\right), \\
\mathcal{L}_{\mathrm{GRPO}}
&=
-\frac{1}{G}
\sum_{i=1}^{G}
\frac{1}{|o_i|}
\sum_{t=1}^{|o_i|}
\min \left(
r_{i,t}\hat{A}_i,\,
\mathrm{clip}(r_{i,t},1-\epsilon,1+\epsilon)\hat{A}_i
\right).
\end{aligned}
\label{eq:lopt_grpo}
\end{equation}
Although the current policy is evaluated using the boundary representation produced by $k_1$, the stop-gradient boundary prevents the resulting policy-gradient signal from entering the first-half block:
\begin{equation}
    \nabla_{\theta_1}\mathcal{L}_{\mathrm{GRPO}}=\mathbf{0},
    \qquad
    \nabla_{\theta_2}\mathcal{L}_{\mathrm{GRPO}}\not\equiv\mathbf{0}.
\end{equation}

\paragraph{Feature reconstruction for preserving useful lower-level representations.}
The stop-gradient boundary removes direct task supervision from the first-half block. To keep this block trainable, LoPT updates it with a local feature-reconstruction objective instead of a task-level decoding loss. We attach a lightweight bottleneck MLP to the boundary representation:
\begin{equation}
g_\phi(\mathbf{h})
=
W_2\,\mathrm{GELU}\!\left(W_1\,\mathrm{LayerNorm}(\mathbf{h})+\mathbf{b}_1\right)
+\mathbf{b}_2,
\end{equation}
where $W_1\in\mathbb{R}^{\frac{d}{4}\times d}$ and $W_2\in\mathbb{R}^{d\times \frac{d}{4}}$. 
The auxiliary objective then can be written as:
\begin{equation}
\mathcal{L}_{\mathrm{aux}}
=
\frac{1}{BLD}
\sum_{b=1}^{B}
\sum_{\ell=1}^{L}
\left\|
g_\phi(\mathbf{h}_{1,b,\ell})
-
\mathrm{sg}(\mathbf{h}_{0,b,\ell})
\right\|_2^2 .
\label{eq:lopt_aux}
\end{equation}

$\mathcal{L}_{\mathrm{aux}}$ reconstructs the stop-gradient input embedding state from the first-half output and is intended to preserve useful lower-level representations and maintain a compatible interface for the adapting second-half block, rather than to solve the downstream task independently. The auxiliary update optimizes both the first-half block and the auxiliary head \(g_\phi\), whereas the task update optimizes only the second-half block:
\begin{equation}
\begin{aligned}
    (\theta_1,\phi)
    &\leftarrow
    (\theta_1,\phi)
    -
    \eta_1
    \nabla_{\theta_1,\phi}
    \lambda_{\mathrm{aux}}\mathcal{L}_{\mathrm{aux}}, \\
    \theta_2
    &\leftarrow
    \theta_2
    -
    \eta_2
    \nabla_{\theta_2}\mathcal{L}_{\mathrm{task}},
    \qquad
    \mathcal{L}_{\mathrm{task}}\in
    \{\mathcal{L}_{\mathrm{SFT}},\mathcal{L}_{\mathrm{GRPO}}\}.
\end{aligned}
\label{eq:lopt_updates}
\end{equation}

Appendix~\ref{app:local_objective_ablation} further shows replacing feature reconstruction with a local next-token-prediction objective severely degrades performance, supporting our choice of a feature reconstruction objective.

\paragraph{Implemented update order.}
Each LoPT step uses two separated backward paths. We first compute the auxiliary reconstruction loss and update \((\theta_1,\phi)\). We then recompute the boundary activation with the updated first-half block, detach it with \(\mathrm{sg}(\cdot)\), and update only \(\theta_2\) using the task loss. This order avoids using a stale boundary activation and allows the first-half computation graph from the auxiliary pass to be released before constructing the task-loss graph. The auxiliary head is used only during training and discarded at inference time; inference follows the standard forward path through \(k_1\) and \(k_2\) without additional computation. Additional implementation details are provided in Appendices~\ref{app:exact_impl} and~\ref{app:aux_net}.

\section{Experiments}
\label{sec:setup}


\subsection{Implementations}
\subsubsection{Models and Datasets}
We train LoPT on Qwen3-4B~\citep{yang2025qwen3}, Qwen2.5-7B-Instruct~\citep{yang2024qwen2}, and Llama-3.1-8B-Instruct~\citep{grattafiori2024llama3}. For supervised fine-tuning, we use Alpaca-52K~\citep{taori2023alpaca}, Tulu-3 SFT Mix 100K~\citep{lambert2024tulu3}, Magpie-Pro 100K~\citep{xu2025magpie}, and MetaMathQA 100K~\citep{yu2024metamathqa}, covering both general instruction tuning and math-focused supervised adaptation. For GRPO, we use GSM8K-train~\citep{cobbe2021gsm8k} and NuminaMath~\citep{li2024numinamath} as training datasets.

\paragraph{Experimental protocol.}

We evaluate LoPT under both supervised fine-tuning (SFT) and GRPO. The main-text tables report representative SFT results on Alpaca-52K and MetaMathQA 100K and GSM8K GRPO results, while Appendix~\ref{app:full_sft} and Appendix~\ref{app:full_grpo} provide the full SFT sweep over Alpaca-52K, Tulu-3 SFT Mix 100K, Magpie-Pro 100K, and MetaMathQA 100K and the full GRPO sweep over GSM8K and NuminaMath. Unless otherwise stated, trainable configurations use three random seeds, and all deltas, win counts, and seven-benchmark averages are computed from seed means. Models are evaluated with lm-eval-harness on seven benchmarks covering instruction following, reasoning, factuality, commonsense, and language understanding. Appendix~\ref{app:training_details} and Appendix~\ref{app:evaluation_details} provide hyperparameters, decoding settings, seed averaging, and evaluation details.

\subsection{Main Results}
\label{sec:results}

\subsubsection{Quality--Efficiency Trade-off}
\label{sec:capability}

Tables~\ref{tab:sft_quality_efficiency} and~\ref{tab:grpo_quality_efficiency} compare LoPT with standard end-to-end post-training in both quality and training cost. Base denotes the pretrained checkpoint, and E2E denotes full-depth task-gradient propagation. For SFT, we report matched runs on Alpaca-52K and MetaMathQA 100K; for GRPO, we train on GSM8K-train and report the cost of the policy-update phase, since rollout generation is unchanged between E2E and LoPT.

\paragraph{SFT.}

LoPT is most beneficial when the SFT data are weakly aligned with held-out evaluation. On Alpaca-52K, E2E causes large drops on several non-rehearsed benchmarks, especially for Llama-3.1-8B-Instruct: GSM8K falls from 69.67 to 46.62 and IFEval from 44.54 to 30.50, while LoPT recovers them to 72.10 and 46.03. The same pattern appears more mildly on Qwen2.5-7B-Instruct, and the average LoPT-over-E2E gains on Alpaca-52K are $+1.32$, $+1.97$, and $+7.22$ across the three model blocks. On MetaMathQA 100K, where the training data are closer to GSM8K-style reasoning, LoPT is mostly near parity with E2E, but still improves the Llama-3.1-8B-Instruct average by $+2.77$. Across matched SFT runs, LoPT reduces peak memory by 24--36\% and increases throughput by 2--7\%, showing that the quality gains do not come from additional training cost.

\paragraph{GRPO.}
Under GRPO, LoPT mainly provides lower-cost policy updates while maintaining near-parity quality. Since GSM8K is both the training task and the main optimized metric, LoPT differs only slightly from E2E on GSM8K, with gains below $0.4\pp$ across the three models. The held-out columns still show modest preservation effects, such as Qwen2.5-7B-Instruct improving IFEval from 56.01 to 59.70 and keeping TruthfulQA closer to Base. The efficiency gains are clearer: LoPT reduces policy-update peak memory by 21--24\% and reduces update step time by 7--21\%. These savings follow from the same mechanism as in SFT: the task-induced backward graph stops at the midpoint boundary, reducing activation storage and backward computation below it. Appendix~\ref{app:extended_efficiency} further shows that this gradient-routing gain is compatible with gradient checkpointing, LoRA, DeepSpeed-ZeRO-1/2/3, and pipeline parallelism.

\begin{table}[htbp]
\caption{SFT evaluation results and training cost on lm-eval-harness. Each trainable configuration is run with three different random seeds, and reported values are means over the three runs. Each model is trained on Alpaca-52K or MetaMathQA and evaluated under Base, E2E, and LoPT. Peak memory is measured in GB, and speed is measured as total training throughput in tokens/s.}
\centering
\scriptsize
\setlength{\tabcolsep}{2.8pt}
\renewcommand{\arraystretch}{1.08}
\resizebox{\textwidth}{!}{
\begin{tabular}{lllccccccccc}
\toprule
Dataset & Model & Method
& MMLU$\uparrow$ & IFEval$\uparrow$ & ARC-C$\uparrow$ & GSM8K$\uparrow$
& HellaSwag$\uparrow$ & TruthfulQA$\uparrow$ & Winogrande$\uparrow$
& Peak Mem (GB)$\downarrow$ & Speed (tok/s)$\uparrow$ \\
\midrule
\multirow{9}{*}{\textbf{Alpaca-52K}}
& \multirow{3}{*}{Qwen3-4B-Base}
& Base
& 70.07 & 23.84 & 57.84 & \textbf{85.14} & 65.62 & \textbf{54.85} & 67.56
& - & - \\
& & E2E
& \textbf{70.12} & 25.87 & 61.77 & 79.45 & 69.80 & 53.07 & \textbf{68.19}
& 31.5 & 21{,}020 \\
& & \textbf{LoPT (ours)}
& 69.99 & \textbf{29.39} & \textbf{61.95} & 84.62 & \textbf{69.97} & 53.91 & 67.69
& 21.9~($\downarrow30\%$) & 21{,}469~($\uparrow$2\%) \\
\cmidrule(lr){2-12}

& \multirow{3}{*}{Qwen2.5-7B-Ins}
& Base
& \textbf{74.17} & \textbf{57.30} & \textbf{66.72} & \textbf{83.47} & 79.35 & \textbf{64.75} & 74.51
& - & - \\
& & E2E
& 73.09 & 40.30 & 60.58 & 74.22 & 78.55 & 52.28 & \textbf{75.77}
& 57.7 & 15{,}561 \\
& & \textbf{LoPT (ours)}
& 74.08 & 41.22 & 63.96 & 79.38 & \textbf{79.46} & 54.86 & 75.63
& 37.2~($\downarrow$36\%) & 15{,}971~($\uparrow$3\%) \\
\cmidrule(lr){2-12}

& \multirow{3}{*}{Llama-3.1-8B-Ins}
& Base
& 68.20 & 44.54 & \textbf{60.49} & 69.67 & \textbf{79.42} & \textbf{54.56} & \textbf{77.50}
& - & - \\
& & E2E
& \textbf{68.67} & 30.50 & 57.59 & 46.62 & 78.31 & 47.39 & 76.09
& 60.6 & 14{,}483 \\
& & \textbf{LoPT (ours)}
& 68.48 & \textbf{46.03} & 60.41 & \textbf{72.10} & 79.38 & 52.67 & 76.64
& 38.9~($\downarrow$36\%) & 14{,}882~($\uparrow$3\%) \\

\midrule
\multirow{9}{*}{\textbf{MetaMathQA 100K}}
& \multirow{3}{*}{Qwen3-4B-Base}
& Base
& 70.07 & \textbf{23.84} & 57.84 & \textbf{85.14} & 65.62 & \textbf{54.85} & 67.56
& - & - \\
& & E2E
& \textbf{70.18} & 20.89 & 63.57 & 81.20 & 72.43 & 54.49 & 68.69
& 33.0 & 25{,}019 \\
& & \textbf{LoPT (ours)}
& 70.17 & 22.26 & \textbf{63.64} & 81.21 & \textbf{72.53} & 54.55 & \textbf{68.88}
& 25.2~($\downarrow$24\%) & 26{,}670~($\uparrow$7\%) \\
\cmidrule(lr){2-12}

& \multirow{3}{*}{Qwen2.5-7B-Ins}
& Base
& 74.17 & \textbf{57.30} & \textbf{66.72} & 83.47 & 79.35 & \textbf{64.75} & \textbf{74.51}
& - & - \\
& & E2E
& 74.13 & 51.76 & 63.91 & 86.80 & 80.85 & 62.63 & 71.51
& 58.7 & 19{,}567 \\
& & \textbf{LoPT (ours)}
& \textbf{74.23} & 51.46 & 64.33 & \textbf{87.21} & \textbf{80.97} & 62.66 & 74.03
& 38.4~($\downarrow$35\%) & 20{,}511~($\uparrow$5\%) \\
\cmidrule(lr){2-12}

& \multirow{3}{*}{Llama-3.1-8B-Ins}
& Base
& \textbf{68.20} & \textbf{44.54} & 60.49 & 69.67 & 79.42 & \textbf{54.56} & \textbf{77.50}
& - & - \\
& & E2E
& 65.29 & 34.94 & 58.70 & 76.79 & 77.70 & 52.06 & 76.56
& 61.4 & 18{,}028 \\
& & \textbf{LoPT (ours)}
& 67.94 & 43.99 & \textbf{60.92} & \textbf{79.27} & \textbf{79.71} & 52.93 & 76.64
& 39.9~($\downarrow$35\%) & 19{,}194~($\uparrow$6\%) \\
\bottomrule
\end{tabular}
}
\vspace{0.5mm}
\label{tab:sft_quality_efficiency}
\end{table}

\begin{table}[htbp]
\caption{GRPO evaluation results and policy-update cost on lm-eval-harness. Trainable rows are three-seed means. Models are trained on GSM8K and evaluated under Base, E2E, and LoPT. Peak memory is in GB; step time is policy-update time in s/step.}
\centering
\label{tab:grpo_quality_efficiency}
\scriptsize
\setlength{\tabcolsep}{2.8pt}
\renewcommand{\arraystretch}{1.08}
\resizebox{\textwidth}{!}{
\begin{tabular}{lllccccccccc}
\toprule
Dataset & Model & Method
& MMLU$\uparrow$ & IFEval$\uparrow$ & ARC-C$\uparrow$ & GSM8K$\uparrow$
& HellaSwag$\uparrow$ & TruthfulQA$\uparrow$ & Winogrande$\uparrow$
& Peak Mem (GB)$\downarrow$ & Step Time (s/step)$\downarrow$ \\
\midrule
\multirow{9}{*}{\textbf{GSM8K}}
& \multirow{3}{*}{Qwen3-4B-Base}
& Base
& 70.07 & 23.84 & 57.84 & 85.14 & 65.62 & \textbf{54.85} & \textbf{67.56}
& - & - \\
& & E2E
& 70.01 & \textbf{24.03} & 62.03 & 88.35 & 68.76 & 54.78 & 67.17
& 42.55 & 1.42 \\
& & \textbf{LoPT (ours)}
& \textbf{70.10} & 23.84 & \textbf{62.29} & \textbf{88.63} & \textbf{69.95} & 54.62 & 66.85
& 32.89~($\downarrow$23\%) & 1.32~($\downarrow$7\%) \\
\cmidrule(lr){2-12}

& \multirow{3}{*}{Qwen2.5-7B-Ins}
& Base
& 74.17 & 57.30 & \textbf{66.72} & 83.47 & 79.35 & 64.75 & 74.51
& - & - \\
& & E2E
& \textbf{74.22} & 56.01 & \textbf{66.72} & 88.70 & 81.23 & 64.52 & 75.22
& 75.39 & 1.71 \\
& & \textbf{LoPT (ours)}
& 74.18 & \textbf{59.70} & 66.38 & \textbf{89.09} & \textbf{81.40} & \textbf{64.83} & \textbf{75.40}
& 59.66~($\downarrow$21\%) & 1.56~($\downarrow$9\%) \\
\cmidrule(lr){2-12}

& \multirow{3}{*}{Llama-3.1-8B-Ins}
& Base
& 68.20 & 44.54 & 60.49 & 69.67 & 79.42 & \textbf{54.56} & \textbf{77.50}
& - & - \\
& & E2E
& 68.44 & \textbf{47.69} & \textbf{61.69} & 82.02 & 80.39 & 54.53 & 76.80
& 62.62 & 2.10 \\
& & \textbf{LoPT (ours)}
& \textbf{68.55} & 47.03 & 61.43 & \textbf{82.40} & \textbf{80.47} & 54.11 & 77.35
& 47.54~($\downarrow$24\%) & 1.65~($\downarrow$21\%) \\
\bottomrule
\end{tabular}
}
\vspace{0.5mm}
\end{table}

\subsubsection{Localized Parameter Drift}
\label{sec:param_drift}

LoPT changes where task-induced gradients act; it does not freeze the first-half block or remove it from the forward computation. The task objective updates \(k_2\), while \(k_1\) remains trainable through feature reconstruction. We therefore expect persistent post-training movement to concentrate above the midpoint boundary, with \(k_1\) staying close to the base checkpoint.

We measure layer-wise relative parameter drift between the tuned and base checkpoints:
\begin{equation}
    \Delta_\ell
    =
    \frac{
    \left\|
    \theta^{\mathrm{tuned}}_\ell
    -
    \theta^{\mathrm{base}}_\ell
    \right\|_2
    }{
    \left\|
    \theta^{\mathrm{base}}_\ell
    \right\|_2
    } .
\end{equation}
Figure~\ref{fig:param_drift} shows that E2E spreads drift across both \(k_1\) and \(k_2\), whereas LoPT keeps \(k_1\) close to the base checkpoint and concentrates most persistent movement in \(k_2\). This pattern also appears when drift is decomposed into attention and MLP parameters, supporting the intended gradient-routing mechanism: the task branch adapts the second-half block, while the first-half block is updated by a local objective rather than by the task loss.

\begin{figure}[htbp]
    \centering
    \includegraphics[width=0.9\linewidth]{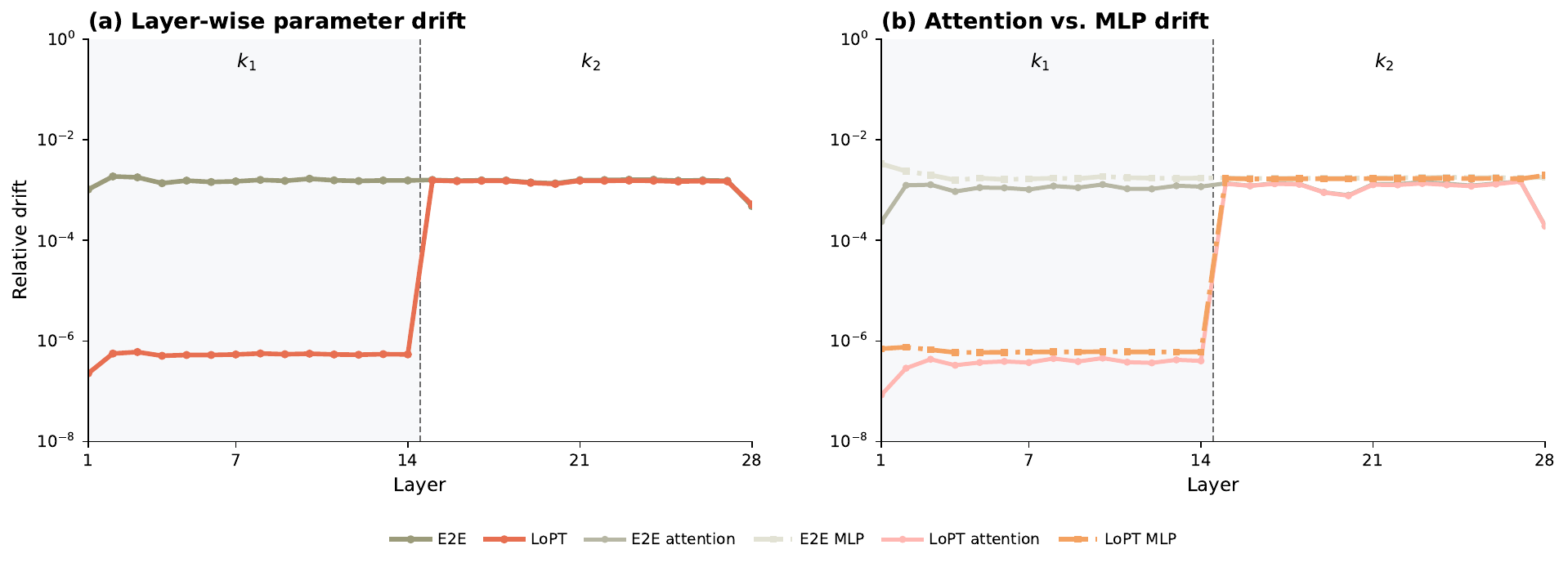}
    \caption{\textbf{Layer-wise parameter drift under GRPO.}
    Drift is measured as the relative parameter change from the base checkpoint to the final tuned checkpoint and averaged over three independent GRPO runs with different random seeds.
    LoPT concentrates persistent movement in \(k_2\), whereas E2E spreads movement across the full model.}
    \label{fig:param_drift}
\end{figure}

Small first-half drift should not be mistaken for functional inactivity. Table~\ref{tab:front14_perturb} calibrates this scale by perturbing only the first 14 layers of the Qwen2.5-7B-Instruct base checkpoint with a relative drift of \(2\times10^{-7}\). The average score changes from 71.47 to 70.98, a drop of \(0.49\pp\), with mostly small negative shifts. This diagnostic does not suggest that random perturbations are useful; it only shows that very small early-layer changes can still be visible at the benchmark level. Thus, LoPT's small \(k_1\) drift is better interpreted as controlled auxiliary-driven movement around the pretrained interface, not as freezing.

\begin{table}[htbp]
\caption[Front-layer random perturbation sensitivity check]{\textbf{Front-layer random perturbation sensitivity check.} Qwen2.5-7B-Ins base checkpoint; values are three-perturbation-seed means (\%).}
\label{tab:front14_perturb}
\centering
\scriptsize
\setlength{\tabcolsep}{5pt}
\renewcommand{\arraystretch}{1.08}
\begin{tabular}{lccc}
\toprule
Benchmark & Base & Front14 \( \mathrm{rel\_drift}=2\times10^{-7} \) perturbation & \(\Delta\) \\
\midrule
MMLU & \textbf{74.17} & 73.26 & -0.91 \\
IFEval & 57.30 & \textbf{58.59} & +1.29 \\
ARC-C & \textbf{66.72} & 66.15 & -0.57 \\
GSM8K & \textbf{83.47} & 82.01 & -1.46 \\
HellaSwag & \textbf{79.35} & 79.16 & -0.19 \\
TruthfulQA & \textbf{64.75} & 64.02 & -0.73 \\
Winogrande & \textbf{74.51} & 73.66 & -0.85 \\
\midrule
Avg. & \textbf{71.47} & 70.98 & -0.49 \\
\bottomrule
\end{tabular}
\vspace{0.45em}
\begin{flushleft}
\footnotesize{
The perturbation is applied only to the first 14 layers of the base checkpoint; the remaining layers are unchanged. This diagnostic is not a post-training method and is not used as evidence of quality improvement. It only calibrates that small early-layer perturbations can yield measurable but near-parity benchmark shifts.
}
\end{flushleft}
\end{table}

\subsection{Ablation Study}

\subsubsection{Reconstruction-Guided Updates versus Freezing the First-Half Block.}
We further test whether the reconstruction objective contributes beyond LoPT. We compare LoPT with freeze-\(k_1\), a matched control that keeps the same settings as LoPT, but removes the first-half-block reconstruction loss and update. As shown in Table~\ref{tab:freeze_grpo}, LoPT outperforms freeze-\(k_1\) on all seven benchmarks in lm-eval-harness, with an average gain of \(+1.24\pp\) under SFT. Under GRPO, it improves six of seven benchmarks, with MMLU nearly unchanged at \(-0.02\pp\), and yields an average gain of \(+1.64\pp\). These results also suggest that the benefit is not due to arbitrary same-scale parameter movement. Random \(2\times10^{-7}\)-level perturbations in the first 14 layers mostly degrade the base model, whereas LoPT's reconstruction-guided updates improve over the frozen-\(k_1\) control. Therefore, the useful effect of LoPT comes from keeping the lower block trainable through a local, feature-level objective while preventing downstream task gradients from directly reshaping it.

\begin{table}[htbp]
\caption[freeze-\(k_1\) ablations in SFT and GRPO]{\textbf{freeze-\(k_1\) ablations in SFT and GRPO.} Values are three-seed means (\%). \(\Delta\) denotes LoPT (ours) minus freeze-\(k_1\).}
\label{tab:freeze_grpo}
\centering
\scriptsize
\setlength{\tabcolsep}{2.8pt}
\renewcommand{\arraystretch}{1.08}
\resizebox{0.94\textwidth}{!}{
\begin{tabular}{lccccc|ccccc}
\toprule
& \multicolumn{5}{c}{\textbf{SFT: MetaMathQA 100K / Qwen2.5-7B-Ins}}
& \multicolumn{5}{c}{\textbf{GRPO: GSM8K / Qwen2.5-7B-Ins}} \\
\cmidrule(lr){2-6}\cmidrule(lr){7-11}
Benchmark
& Base & E2E & freeze-\(k_1\) & LoPT (ours) & \(\Delta\)
& Base & E2E & freeze-\(k_1\) & LoPT (ours) & \(\Delta\) \\
\midrule
MMLU
& 74.17 & 74.13 & 74.06 & \textbf{74.23} & \qnear{+0.17}
& 74.17 & 74.22 & \textbf{74.20} & 74.18 & \qnear{-0.02} \\

IFEval
& 57.30 & 51.76 & 50.46 & \textbf{51.46} & \qpos{+1.00}
& 57.30 & 56.01 & 57.96 & \textbf{59.70} & \qpos{+1.74} \\

ARC-C
& 66.72 & 63.91 & 63.76 & \textbf{64.33} & \qnear{+0.57}
& 66.72 & 66.72 & 64.55 & \textbf{66.38} & \qpos{+1.83} \\

GSM8K
& 83.47 & 86.80 & 85.91 & \textbf{87.21} & \qpos{+1.30}
& 83.47 & 88.70 & 87.56 & \textbf{89.09} & \qpos{+1.53} \\

HellaSwag
& 79.35 & 80.85 & 78.84 & \textbf{80.97} & \qpos{+2.13}
& 79.35 & 81.23 & 79.37 & \textbf{81.40} & \qpos{+2.03} \\

TruthfulQA
& 64.75 & 62.63 & 60.61 & \textbf{62.66} & \qpos{+2.05}
& 64.75 & 64.52 & 62.72 & \textbf{64.83} & \qpos{+2.11} \\

Winogrande
& 74.51 & 71.51 & 72.56 & \textbf{74.03} & \qpos{+1.47}
& 74.51 & 75.22 & 73.16 & \textbf{75.40} & \qpos{+2.24} \\
\midrule
Avg. \(\Delta\)
& \multicolumn{4}{c}{} & \qpos{+1.24}
& \multicolumn{4}{c}{} & \qpos{+1.64} \\
\bottomrule
\end{tabular}
}
\vspace{0.45em}
\begin{flushleft}
\footnotesize{
For each benchmark, bolding compares freeze-\(k_1\) and LoPT (ours); Base and E2E are included as context. Green marks \(\Delta \ge +1\pp\), and gray marks differences within \(\pm1\pp\). These are diagnostic ablations rather than a broad freeze-\(k_1\) sweep.
}
\end{flushleft}
\end{table}

\subsubsection{Ablation on the Number of Partitions}
We ablate the number of LoPT partitions in Table~\ref{tab:k4_ablation}, comparing the default two-block design (\(k=2\), one midpoint boundary)
with a more heavily partitioned variant (\(k=4\)). This targeted ablation tests whether further shortening task-gradient reach is beneficial.
\begin{table}[htbp]
\caption{Midpoint-boundary \(k\) ablation on Qwen2.5-7B-Instruct; values are three-seed means.}
\centering
\scriptsize
\setlength{\tabcolsep}{7pt}
\renewcommand{\arraystretch}{1.08}
\begin{tabular}{lccc}
\toprule
Benchmark & E2E & LoPT \(k=2\) & LoPT \(k=4\) \\
\midrule
MMLU & 74.13 & \textbf{74.23} & 74.09 \\
IFEval & 51.76 & 51.46 & \textbf{52.60} \\
ARC-C & 63.91 & \textbf{64.33} & 63.18 \\
GSM8K & 86.80 & \textbf{87.21} & 85.59 \\
HellaSwag & 80.85 & \textbf{80.97} & 80.30 \\
TruthfulQA & 62.63 & \textbf{62.66} & 62.49 \\
Winogrande & 71.51 & \textbf{74.03} & 73.09 \\
\bottomrule
\end{tabular}
\vspace{0.5em}

\label{tab:k4_ablation}
\end{table}

The results show that more partitions are not consistently better. The \(k=4\) variant improves IFEval, suggesting that a shorter task-gradient reach may better preserve certain held-out
instruction-following behavior. However, it underperforms \(k=2\) on most benchmarks, especially the training-aligned GSM8K evaluation. This indicates that over-partitioning can make
the task-gradient reach too narrow, limiting effective task-driven adaptation. We therefore use the single-midpoint \(k=2\) setting as the
default, which better balances capability preservation and task adaptation.

\section{Conclusion}

In this paper, we revisited the default assumption of end-to-end gradient propagation in LLM post-training and argued that gradient reach should be treated as an explicit design variable rather than a fixed implementation choice. We introduced \textbf{Local-Learning Post-Training (LoPT)}, a simple scheme that inserts a single midpoint gradient boundary, optimizes the second-half block with the task objective, and keeps the first-half block trainable through a lightweight feature-reconstruction objective. Across both SFT and GRPO settings, LoPT delivers competitive or improved downstream performance while reducing training memory, improving efficiency, and better preserving pretrained capabilities. Because this form of gradient isolation is orthogonal to common parameter-efficient and systems optimizations, it also composes naturally with existing post-training stacks. More broadly, our results suggest that effective post-training does not necessarily require full-depth backward coupling, and we hope this perspective opens a broader design space for methods that jointly optimize capability, efficiency, and representation preservation.




\newpage
\appendix

\definecolor{llwinbg}{RGB}{226,245,232}
\definecolor{lllossbg}{RGB}{252,228,228}
\definecolor{llneutralbg}{RGB}{242,242,242}
\providecommand{\llwin}[1]{\cellcolor{llwinbg}{#1}}
\providecommand{\llloss}[1]{\cellcolor{lllossbg}{#1}}
\providecommand{\llneutral}[1]{\cellcolor{llneutralbg}{#1}}
\providecommand{\pp}{\mathrm{pp}}

\section{Appendix Roadmap}
\label{app:roadmap}

The appendix is organized as follows. Appendix~\ref{sec:limitations} discusses the limitations of this empirical study. Appendix~\ref{sec:related} discusses related work on local learning and memory-efficient LLM post-training. Appendix~\ref{app:experimental_details} provides the shared training and evaluation protocol. Appendix~\ref{app:full_sft} gives the complete SFT sweep across four datasets and three model families, complementing the representative main-text SFT table (Table~\ref{tab:sft_quality_efficiency}). Appendix~\ref{app:full_grpo} gives the full GSM8K and NuminaMath GRPO results behind the main-text GRPO slice (Table~\ref{tab:grpo_quality_efficiency}). Appendix~\ref{app:local_objective_ablation} further ablates the lower-block local objective, comparing feature reconstruction with local next-token-prediction objectives. Appendix~\ref{app:extended_efficiency} expands the efficiency analysis with matched-stack profiling, pipeline-parallel layout checks, and single-/dual-GPU scaling results. Appendix~\ref{app:lora_results} reports LoRA compatibility results, and Appendix~\ref{app:scale32b} provides a limited 32B GRPO sanity check. Appendices~\ref{app:grpo_details} and~\ref{app:exact_impl} document the GRPO hyperparameters and the exact LoPT gradient-routing structure. Appendices~\ref{app:metrics}--\ref{app:aux_net} summarize metric conventions, compute budget, and auxiliary-head details. Appendix~\ref{app:theory} provides a theoretical analysis of LoPT, including task-gradient reach, first-half drift, feature reconstruction, held-out retention, convergence of the separated update paths, and memory implications.

\section{Limitations}
\label{sec:limitations}

This work has two main limitations. First, due to compute constraints, our main quality sweep focuses on 4B--8B models, with an additional 32B GRPO sanity check in Appendix~\ref{app:scale32b}. This leaves open the question of whether the same trends hold for frontier-scale open-weight models, including recent MoE models with hundreds of billions of total parameters. Second, our default design uses a simple midpoint split, and Table~\ref{tab:k4_ablation} only compares it with one more fine-grained \(k{=}4\) variant. A broader search over split granularity and boundary placement could provide a more precise understanding of when and where local post-training is most effective.


\section{Additional Experimental Details}
\label{app:experimental_details}

\subsection{Training Details}
\label{app:training_details}
Unless otherwise specified, all main SFT and GRPO quality/efficiency experiments are conducted on 8$\times$A100 80GB GPUs. Additional systems-profiling experiments in Appendix~\ref{app:extended_efficiency} use the hardware stack specified in each table: the pipeline and dual-GPU profiling experiments use 2$\times$A100 80GB GPUs, and the single-GPU scaling experiments use 1$\times$A100 80GB GPU. Each trainable configuration is repeated three times with different random seeds. Unless otherwise stated, reported benchmark values, deltas, win counts, and average scores are computed after averaging the three seed-specific outputs for each configuration. For SFT, we use a per-device batch size of 4 and a learning rate of $2{\times}10^{-5}$. LoPT uses a default auxiliary reconstruction weight of $\lambda_{\mathrm{aux}}{=}10.0$ and the same optimizer family as the matched E2E run. For GRPO, each policy-update group contains 4 prompts with 4 sampled responses per prompt. We train for 1 epoch with learning rate $5{\times}10^{-6}$, rollout temperature $0.7$, top-$p=0.95$, and clipping parameter $\varepsilon{=}0.2$. Unless otherwise noted, LoPT differs from the matched E2E baseline only by the midpoint stop-gradient boundary and the auxiliary reconstruction update for the first-half block.

\subsection{Evaluation Details}
\label{app:evaluation_details}
For downstream benchmark, we assess LoPT with lm-eval-harness~\citep{gao2023lmeval} under a unified protocol. The main evaluation benchmark includes MMLU (5-shot)~\citep{hendrycks2021mmlu}, IFEval~\citep{zhou2024ifeval}, ARC-Challenge (25-shot)~\citep{clark2018arc}, GSM8K (4-shot)~\citep{cobbe2021gsm8k}, HellaSwag (5-shot)~\citep{zellers2019hellaswag}, TruthfulQA MC2~\citep{lin2022truthfulqa}, and Winogrande (5-shot)~\citep{sakaguchi2021winogrande}. These benchmarks cover general knowledge, instruction following, reasoning, commonsense understanding, truthfulness, and mathematical problem solving. For trained checkpoints, all reported lm-eval scores are means over three independently trained checkpoints with different random seeds. We compute aggregate averages and method differences from these seed-averaged scores.

For training efficiency, we report peak GPU memory during training. For SFT, speed is measured as total training throughput in tokens/s. For GRPO, speed is measured as policy-update step time because rollout generation is unchanged between E2E and LoPT and is therefore not the source of the reported update-phase savings.

\section{Full SFT Results}
\label{app:full_sft}

This section provides the full SFT sweep behind the main-text SFT discussion. Table~\ref{tab:sft_quality_efficiency} shows a representative slice; here we report all four SFT datasets---Alpaca-52K, Tulu-3 SFT Mix 100K, Magpie-Pro 100K, and MetaMathQA 100K---for Qwen3-4B, Qwen2.5-7B-Instruct, and Llama-3.1-8B-Instruct. All rows use the same seven-benchmark lm-eval suite. Trained-method rows report means over three independent runs with different random seeds. The average deltas are unweighted averages over the seven reported seed-averaged benchmarks.

Base-model scores are reported once in Table~\ref{tab:base_scores_sft}; the per-dataset SFT tables below omit repeated base columns for compactness and focus on E2E--LoPT deltas. These appendix tables should be read together with the main-text claim: LoPT is most helpful when the SFT supervision is narrower than, or weakly aligned with, the held-out evaluation tasks, while math-aligned SFT produces more near-parity behavior.

\begin{table}[H]
\caption{\textbf{Base-model scores used for SFT comparisons.} Values are percentages under the same seven-benchmark lm-eval protocol as the SFT tables.}
\label{tab:base_scores_sft}
\centering
\small
\setlength{\tabcolsep}{4pt}
\begin{tabular}{lcccccccc}
\toprule
Model & MMLU & IFEval & ARC-C & GSM8K & HellaSwag & TruthfulQA & Winogrande & Avg. \\
\midrule
Qwen3-4B & 70.07 & 23.84 & 57.84 & 85.14 & 65.62 & 54.85 & 67.56 & 60.70 \\
Qwen2.5-7B & 74.17 & 57.30 & 66.72 & 83.47 & 79.35 & 64.75 & 74.51 & 71.47 \\
Llama-3.1-8B & 68.20 & 44.54 & 60.49 & 69.67 & 79.42 & 54.56 & 77.50 & 64.91 \\
\bottomrule
\end{tabular}
\end{table}

\begin{table}[H]
\caption{\textbf{SFT training statistics} for all 12 SFT configurations. Memory is peak memory per GPU on 8$\times$A100 80GB GPUs. Throughput is total training throughput in tokens/s. Reported values are averaged over three independent runs with different random seeds.}
\label{tab:sft_training}
\centering
\footnotesize
\setlength{\tabcolsep}{3pt}
\resizebox{0.9\textwidth}{!}{
\begin{tabular}{llcccccc}
\toprule
& & \multicolumn{2}{c}{Peak Mem (GB/GPU)} & \multicolumn{2}{c}{Throughput (tok/s)} & \multicolumn{2}{c}{Time (s)} \\
\cmidrule(lr){3-4}\cmidrule(lr){5-6}\cmidrule(lr){7-8}
Dataset & Model & E2E & LoPT & E2E & LoPT & E2E & LoPT \\
\midrule
\multirow{3}{*}{Alpaca-52K}
& Qwen3-4B & 31.5 & 21.9\,($-$30\%) & 21{,}020 & 21{,}469 & 3{,}800 & 3{,}720 \\
& Qwen2.5-7B & 57.7 & 37.2\,($-$36\%) & 15{,}561 & 15{,}971 & 5{,}133 & 5{,}001 \\
& Llama-3.1-8B & 60.6 & 38.9\,($-$36\%) & 14{,}483 & 14{,}882 & 5{,}515 & 5{,}367 \\
\midrule
\multirow{3}{*}{Tulu-3 SFT Mix 100K}
& Qwen3-4B & 42.6 & 32.9\,($-$23\%) & 25{,}323 & 30{,}162 & 8{,}087 & 6{,}790 \\
& Qwen2.5-7B & 63.5 & 45.5\,($-$28\%) & 21{,}291 & 23{,}951 & 9{,}619 & 8{,}551 \\
& Llama-3.1-8B & 64.1 & 45.2\,($-$29\%) & 19{,}185 & 21{,}747 & 10{,}675 & 9{,}418 \\
\midrule
\multirow{3}{*}{Magpie-Pro 100K}
& Qwen3-4B & 33.0 & 25.2\,($-$24\%) & 24{,}963 & 26{,}571 & 4{,}102 & 3{,}854 \\
& Qwen2.5-7B & 58.7 & 38.4\,($-$35\%) & 19{,}345 & 20{,}581 & 5{,}293 & 4{,}975 \\
& Llama-3.1-8B & 61.4 & 39.9\,($-$35\%) & 17{,}833 & 19{,}083 & 5{,}742 & 5{,}366 \\
\midrule
\multirow{3}{*}{MetaMathQA 100K}
& Qwen3-4B & 33.0 & 25.2\,($-$24\%) & 25{,}019 & 26{,}670 & 4{,}093 & 3{,}840 \\
& Qwen2.5-7B & 58.7 & 38.4\,($-$35\%) & 19{,}567 & 20{,}511 & 5{,}233 & 4{,}992 \\
& Llama-3.1-8B & 61.4 & 39.9\,($-$35\%) & 18{,}028 & 19{,}194 & 5{,}680 & 5{,}335 \\
\bottomrule
\end{tabular}
}
\end{table}

\paragraph{Efficiency pattern across the full SFT sweep.}
Table~\ref{tab:sft_training} shows that the memory reduction is consistent across all SFT datasets and model families: LoPT saves 23--36\% peak memory relative to the matched E2E run. Throughput also improves in every measured configuration, ranging from modest gains on Alpaca-52K to larger gains on the longer 100K-example settings. This supports the systems claim that the auxiliary reconstruction branch does not erase the savings from truncating the task-loss backward graph.

\begin{table}[H]
\caption{\textbf{Alpaca-52K SFT results} with per-benchmark deltas (\%). Reported values are averaged over three independent runs with different random seeds.}
\label{tab:app_alpaca}
\centering
\footnotesize
\setlength{\tabcolsep}{2.5pt}
\resizebox{0.9\textwidth}{!}{
\begin{tabular}{l|ccc|ccc|ccc}
\toprule
& \multicolumn{3}{c|}{Qwen3-4B} & \multicolumn{3}{c|}{Qwen2.5-7B} & \multicolumn{3}{c}{Llama-3.1-8B} \\
Benchmark & E2E & LoPT & $\Delta$ & E2E & LoPT & $\Delta$ & E2E & LoPT & $\Delta$ \\
\midrule
MMLU & \textbf{70.12} & 69.99 & \llneutral{$-0.13$} & 73.09 & \textbf{74.08} & \llneutral{$+0.99$} & \textbf{68.67} & 68.48 & \llneutral{$-0.19$} \\
IFEval & 25.87 & \textbf{29.39} & \llwin{$\mathbf{+3.52}$} & 40.30 & \textbf{41.22} & \llneutral{$+0.92$} & 30.50 & \textbf{46.03} & \llwin{$\mathbf{+15.53}$} \\
ARC-C & 61.77 & \textbf{61.95} & \llneutral{$+0.18$} & 60.58 & \textbf{63.96} & \llwin{$\mathbf{+3.38}$} & 57.59 & \textbf{60.41} & \llwin{$+2.82$} \\
GSM8K & 79.45 & \textbf{84.62} & \llwin{$\mathbf{+5.17}$} & 74.22 & \textbf{79.38} & \llwin{$\mathbf{+5.16}$} & 46.62 & \textbf{72.10} & \llwin{$\mathbf{+25.48}$} \\
HellaSwag & 69.80 & \textbf{69.97} & \llneutral{$+0.17$} & 78.55 & \textbf{79.46} & \llneutral{$+0.91$} & 78.31 & \textbf{79.38} & \llwin{$+1.07$} \\
TruthfulQA & 53.07 & \textbf{53.91} & \llneutral{$+0.84$} & 52.28 & \textbf{54.86} & \llwin{$+2.58$} & 47.39 & \textbf{52.67} & \llwin{$\mathbf{+5.28}$} \\
Winogrande & \textbf{68.19} & 67.69 & \llneutral{$-0.50$} & \textbf{75.77} & 75.63 & \llneutral{$-0.14$} & 76.09 & \textbf{76.64} & \llneutral{$+0.55$} \\
\midrule
LoPT Wins & \multicolumn{3}{c|}{5/7, avg \llwin{$+1.32\pp$}} & \multicolumn{3}{c|}{6/7, avg \llwin{$+1.97\pp$}} & \multicolumn{3}{c}{6/7, avg \llwin{$\mathbf{+7.22\pp}$}} \\
\bottomrule
\end{tabular}
}
\end{table}

\paragraph{Alpaca-52K.}
Table~\ref{tab:app_alpaca} shows the clearest held-out preservation pattern. Alpaca-52K is a broad instruction dataset and does not directly rehearse several of the evaluated skills. LoPT improves the average over E2E for all three model families, with the largest effect on Llama-3.1-8B-Instruct ($+7.22\pp$). The largest individual gains are on GSM8K and IFEval, indicating that full-depth E2E instruction tuning can strongly disturb capabilities not emphasized by the SFT data, while limiting task-gradient reach helps reduce this degradation.

\begin{table}[H]
\caption{\textbf{Tulu-3 SFT Mix 100K SFT results} with per-benchmark deltas (\%). Reported values are averaged over three independent runs with different random seeds.}
\label{tab:app_tulu3}
\centering
\footnotesize
\setlength{\tabcolsep}{2.5pt}
\resizebox{0.9\textwidth}{!}{
\begin{tabular}{l|ccc|ccc|ccc}
\toprule
& \multicolumn{3}{c|}{Qwen3-4B} & \multicolumn{3}{c|}{Qwen2.5-7B} & \multicolumn{3}{c}{Llama-3.1-8B} \\
Benchmark & E2E & LoPT & $\Delta$ & E2E & LoPT & $\Delta$ & E2E & LoPT & $\Delta$ \\
\midrule
MMLU & 69.35 & \textbf{69.80} & \llneutral{$+0.45$} & 73.64 & \textbf{74.16} & \llneutral{$+0.52$} & 65.35 & \textbf{67.54} & \llwin{$\mathbf{+2.19}$} \\
IFEval & \textbf{21.63} & 20.59 & \llloss{$-1.04$} & 33.09 & \textbf{39.74} & \llwin{$\mathbf{+6.65}$} & 23.29 & \textbf{37.34} & \llwin{$\mathbf{+14.05}$} \\
ARC-C & 60.75 & \textbf{61.60} & \llneutral{$+0.85$} & 60.49 & \textbf{63.40} & \llwin{$\mathbf{+2.91}$} & 57.94 & \textbf{60.41} & \llwin{$\mathbf{+2.47}$} \\
GSM8K & 77.10 & \textbf{79.45} & \llwin{$+2.35$} & 51.55 & \textbf{68.31} & \llwin{$\mathbf{+16.76}$} & 63.91 & \textbf{74.83} & \llwin{$\mathbf{+10.92}$} \\
HellaSwag & \textbf{70.93} & 70.58 & \llneutral{$-0.35$} & 79.08 & \textbf{79.57} & \llneutral{$+0.49$} & 78.68 & \textbf{79.19} & \llneutral{$+0.51$} \\
TruthfulQA & 53.14 & \textbf{53.77} & \llneutral{$+0.63$} & 50.66 & \textbf{55.51} & \llwin{$\mathbf{+4.85}$} & 48.83 & \textbf{54.17} & \llwin{$\mathbf{+5.34}$} \\
Winogrande & \textbf{69.53} & 68.67 & \llneutral{$-0.86$} & \textbf{75.93} & 74.51 & \llloss{$-1.42$} & \textbf{78.22} & 77.11 & \llloss{$-1.11$} \\
\midrule
LoPT Wins & \multicolumn{3}{c|}{4/7, avg \llneutral{$+0.29\pp$}} & \multicolumn{3}{c|}{6/7, avg \llwin{$\mathbf{+4.39\pp}$}} & \multicolumn{3}{c}{6/7, avg \llwin{$\mathbf{+4.91\pp}$}} \\
\bottomrule
\end{tabular}
}
\end{table}

\paragraph{Tulu-3 SFT Mix 100K.}
Table~\ref{tab:app_tulu3} shows a mixed but still positive pattern. Qwen3-4B is near parity on average ($+0.29\pp$), whereas Qwen2.5-7B-Instruct and Llama-3.1-8B-Instruct show larger gains of $+4.39\pp$ and $+4.91\pp$. The improvements are concentrated on IFEval, GSM8K, ARC-C, and TruthfulQA, while Winogrande can slightly decline. We therefore interpret this setting as evidence that LoPT mainly protects broad held-out behavior, rather than uniformly improving every benchmark.

\begin{table}[H]
\caption{\textbf{Magpie-Pro 100K SFT results} with per-benchmark deltas (\%). Reported values are averaged over three independent runs with different random seeds.}
\label{tab:app_magpie}
\centering
\footnotesize
\setlength{\tabcolsep}{2.5pt}
\resizebox{0.9\textwidth}{!}{
\begin{tabular}{l|ccc|ccc|ccc}
\toprule
& \multicolumn{3}{c|}{Qwen3-4B} & \multicolumn{3}{c|}{Qwen2.5-7B} & \multicolumn{3}{c}{Llama-3.1-8B} \\
Benchmark & E2E & LoPT & $\Delta$ & E2E & LoPT & $\Delta$ & E2E & LoPT & $\Delta$ \\
\midrule
MMLU & 69.12 & \textbf{69.69} & \llneutral{$+0.57$} & 73.28 & \textbf{73.86} & \llneutral{$+0.58$} & 66.67 & \textbf{67.80} & \llwin{$+1.13$} \\
IFEval & 25.88 & \textbf{29.39} & \llwin{$\mathbf{+3.51}$} & 36.23 & \textbf{46.03} & \llwin{$\mathbf{+9.80}$} & \textbf{46.21} & 44.62 & \llloss{$-1.59$} \\
ARC-C & 61.77 & 61.77 & $0.00$ & 62.20 & \textbf{64.68} & \llwin{$\mathbf{+2.48}$} & 60.84 & \textbf{61.77} & \llneutral{$+0.93$} \\
GSM8K & 79.45 & \textbf{83.24} & \llwin{$\mathbf{+3.79}$} & 68.20 & \textbf{80.44} & \llwin{$\mathbf{+12.24}$} & \textbf{72.48} & 71.89 & \llneutral{$-0.59$} \\
HellaSwag & \textbf{70.80} & 69.98 & \llneutral{$-0.82$} & 79.00 & \textbf{79.69} & \llneutral{$+0.69$} & 78.08 & \textbf{79.14} & \llwin{$+1.06$} \\
TruthfulQA & 53.03 & \textbf{53.81} & \llneutral{$+0.78$} & \textbf{59.89} & 59.80 & \llneutral{$-0.09$} & 54.65 & \textbf{54.66} & \llneutral{$+0.01$} \\
Winogrande & \textbf{68.19} & 66.69 & \llloss{$-1.50$} & \textbf{74.90} & 74.19 & \llneutral{$-0.71$} & 76.48 & \textbf{76.95} & \llneutral{$+0.47$} \\
\midrule
LoPT Wins & \multicolumn{3}{c|}{4/7, avg \llneutral{$+0.90\pp$}} & \multicolumn{3}{c|}{5/7, avg \llwin{$\mathbf{+3.57\pp}$}} & \multicolumn{3}{c}{5/7, avg \llneutral{$+0.20\pp$}} \\
\bottomrule
\end{tabular}
}
\end{table}

\paragraph{Magpie-Pro 100K.}
Table~\ref{tab:app_magpie} again shows that the benefit depends on model and benchmark. Qwen2.5-7B-Instruct gains $+3.57\pp$ on average, driven by large improvements on IFEval and GSM8K. Qwen3-4B has a smaller positive average, and Llama-3.1-8B-Instruct is close to parity. This is useful as a stress case: LoPT is not a universal per-benchmark improvement rule, but it tends to reduce large negative transfers when E2E post-training pushes the model away from held-out skills.

\begin{table}[H]
\caption{\textbf{MetaMathQA 100K SFT results} with per-benchmark deltas (\%). Reported values are averaged over three independent runs with different random seeds.}
\label{tab:app_metamath}
\centering
\footnotesize
\setlength{\tabcolsep}{2.5pt}
\resizebox{0.9\textwidth}{!}{
\begin{tabular}{l|ccc|ccc|ccc}
\toprule
& \multicolumn{3}{c|}{Qwen3-4B} & \multicolumn{3}{c|}{Qwen2.5-7B} & \multicolumn{3}{c}{Llama-3.1-8B} \\
Benchmark & E2E & LoPT & $\Delta$ & E2E & LoPT & $\Delta$ & E2E & LoPT & $\Delta$ \\
\midrule
MMLU & \textbf{70.18} & 70.17 & \llneutral{$-0.01$} & 74.13 & \textbf{74.23} & \llneutral{$+0.10$} & 65.29 & \textbf{67.94} & \llwin{$\mathbf{+2.65}$} \\
IFEval & 20.89 & \textbf{22.26} & \llwin{$+1.37$} & \textbf{51.76} & 51.46 & \llneutral{$-0.30$} & 34.94 & \textbf{43.99} & \llwin{$\mathbf{+9.05}$} \\
ARC-C & 63.57 & \textbf{63.64} & \llneutral{$+0.07$} & 63.91 & \textbf{64.33} & \llneutral{$+0.42$} & 58.70 & \textbf{60.92} & \llwin{$\mathbf{+2.22}$} \\
GSM8K & 81.20 & \textbf{81.21} & \llneutral{$+0.01$} & 86.80 & \textbf{87.21} & \llneutral{$+0.41$} & 76.79 & \textbf{79.27} & \llwin{$\mathbf{+2.48}$} \\
HellaSwag & 72.43 & \textbf{72.53} & \llneutral{$+0.10$} & 80.85 & \textbf{80.97} & \llneutral{$+0.12$} & 77.70 & \textbf{79.71} & \llwin{$\mathbf{+2.01}$} \\
TruthfulQA & 54.49 & \textbf{54.55} & \llneutral{$+0.06$} & 62.63 & \textbf{62.66} & \llneutral{$+0.03$} & 52.06 & \textbf{52.93} & \llneutral{$+0.87$} \\
Winogrande & 68.69 & \textbf{68.88} & \llneutral{$+0.19$} & 71.51 & \textbf{74.03} & \llwin{$\mathbf{+2.52}$} & 76.56 & \textbf{76.64} & \llneutral{$+0.08$} \\
\midrule
LoPT Wins & \multicolumn{3}{c|}{6/7, avg \llneutral{$+0.26\pp$}} & \multicolumn{3}{c|}{6/7, avg \llneutral{$+0.47\pp$}} & \multicolumn{3}{c}{7/7, avg \llwin{$+2.77\pp$}} \\
\bottomrule
\end{tabular}
}
\end{table}

\paragraph{MetaMathQA 100K.}
Table~\ref{tab:app_metamath} is more task-aligned with GSM8K-style reasoning, so the contrast between LoPT and E2E is smaller for Qwen3-4B and Qwen2.5-7B-Instruct. The Llama-3.1-8B-Instruct block still improves by $+2.77\pp$ on average, with gains across all seven benchmarks. This supports the main-text interpretation that LoPT is most valuable when E2E produces held-out degradation, while in well-aligned settings it usually remains near parity and preserves the efficiency advantage.

\FloatBarrier
\section{Full GRPO Results}
\label{app:full_grpo}

This section provides the full GRPO results behind the main-text GRPO discussion in Section~\ref{sec:capability}. Table~\ref{tab:grpo_quality_efficiency} includes the matched-model GSM8K GRPO slice used in the main text; here we provide the complete GSM8K and NuminaMath GRPO tables under the same seven-benchmark lm-eval protocol used for SFT. The GRPO quality deltas are intentionally interpreted conservatively: many are within one percentage point and should be read as near parity unless the sign and magnitude are consistent across related metrics.

The training-statistics table reports the measured GRPO policy-update phase. Rollout generation is unchanged between E2E and LoPT, so the speed numbers isolate the policy optimization step rather than full RL wall-clock time.

\begin{table}[H]
\caption{\textbf{GRPO training statistics} for the measured GSM8K GRPO configurations. Memory is peak memory per GPU on 8$\times$A100 80GB GPUs, and step time is s/step for the policy-update phase. Reported values are averaged over three independent runs with different random seeds.}
\label{tab:grpo_training}
\centering
\footnotesize
\setlength{\tabcolsep}{3pt}
\begin{tabular}{llcccccc}
\toprule
& & \multicolumn{2}{c}{Peak Mem (GB/GPU)} & \multicolumn{2}{c}{Step Time (s)} & \multicolumn{2}{c}{Step-Time Reduction} \\
\cmidrule(lr){3-4}\cmidrule(lr){5-6}\cmidrule(lr){7-8}
Dataset & Model & E2E & LoPT & E2E & LoPT & Abs. (s) & Rel. \\
\midrule
GSM8K & Qwen3-4B & 42.55 & 32.89 & 1.42 & 1.32 & 0.10 & $+7\%$ \\
GSM8K & Qwen2.5-7B & 75.39 & 59.66 & 1.71 & 1.56 & 0.15 & $+9\%$ \\
GSM8K & Llama-3.1-8B & 62.62 & 47.54 & 2.10 & 1.65 & 0.45 & $+21\%$ \\
\bottomrule
\end{tabular}
\end{table}

\paragraph{GRPO update cost.}
Reported values are averaged over three independent runs with different random seeds.
Table~\ref{tab:grpo_training} shows consistent update-phase savings across all three seed-averaged GSM8K GRPO model configurations. Peak memory decreases by 21--24\%, and update step time reduces by 7--21\%. The largest speedup appears for Llama-3.1-8B-Instruct, where stopping the task backward path below the midpoint removes a larger amount of backward computation relative to the measured policy-update workload.

\subsection{GSM8K GRPO}
\label{app:grpo_gsm8k}

\begin{table}[H]
\caption{\textbf{Qwen3-4B GSM8K GRPO} --- seven-benchmark results (\%). Reported values are averaged over three independent runs with different random seeds.}
\label{tab:grpo_qwen3}
\centering
\small
\begin{tabular}{lcccc}
\toprule
Benchmark & Base & E2E-GRPO & LoPT-GRPO & $\Delta$ \\
\midrule
MMLU & 70.07 & 70.01 & \textbf{70.10} & \llneutral{$+0.09$} \\
IFEval & 23.84 & \textbf{24.03} & 23.84 & \llneutral{$-0.19$} \\
ARC-C & 57.84 & 62.03 & \textbf{62.29} & \llneutral{$+0.26$} \\
GSM8K & 85.14 & 88.35 & \textbf{88.63} & \llneutral{$+0.28$} \\
HellaSwag & 65.62 & 68.76 & \textbf{69.95} & \llwin{$\mathbf{+1.19}$} \\
TruthfulQA & 54.85 & \textbf{54.78} & 54.62 & \llneutral{$-0.16$} \\
Winogrande & 67.56 & \textbf{67.17} & 66.85 & \llneutral{$-0.32$} \\
\midrule
LoPT Wins & & \multicolumn{2}{c}{4/7} & Avg: \llneutral{$+0.16\pp$} \\
\bottomrule
\end{tabular}
\end{table}

\begin{table}[H]
\caption{\textbf{Qwen2.5-7B GSM8K GRPO} --- seven-benchmark results (\%). Reported values are averaged over three independent runs with different random seeds.}
\label{tab:grpo_qwen7b}
\centering
\small
\begin{tabular}{lcccc}
\toprule
Benchmark & Base & E2E-GRPO & LoPT-GRPO & $\Delta$ \\
\midrule
MMLU & 74.17 & \textbf{74.22} & 74.18 & \llneutral{$-0.04$} \\
IFEval & 57.30 & 56.01 & \textbf{59.70} & \llwin{$\mathbf{+3.69}$} \\
ARC-C & \textbf{66.72} & \textbf{66.72} & 66.38 & \llneutral{$-0.34$} \\
GSM8K & 83.47 & 88.70 & \textbf{89.09} & \llneutral{$+0.39$} \\
HellaSwag & 79.35 & 81.23 & \textbf{81.40} & \llneutral{$+0.17$} \\
TruthfulQA & 64.75 & 64.52 & \textbf{64.83} & \llneutral{$+0.31$} \\
Winogrande & 74.51 & 75.22 & \textbf{75.40} & \llneutral{$+0.18$} \\
\midrule
LoPT Wins & & \multicolumn{2}{c}{5/7} & Avg: \llneutral{$+0.62\pp$} \\
\bottomrule
\end{tabular}
\end{table}

\begin{table}[H]
\caption{\textbf{Llama-3.1-8B GSM8K GRPO} --- seven-benchmark results (\%). Reported values are averaged over three independent runs with different random seeds.}
\label{tab:grpo_llama}
\centering
\small
\begin{tabular}{lcccc}
\toprule
Benchmark & Base & E2E-GRPO & LoPT-GRPO & $\Delta$ \\
\midrule
MMLU & 68.20 & 68.44 & \textbf{68.55} & \llneutral{$+0.11$} \\
IFEval & 44.54 & \textbf{47.69} & 47.03 & \llneutral{$-0.66$} \\
ARC-C & 60.49 & \textbf{61.69} & 61.43 & \llneutral{$-0.26$} \\
GSM8K & 69.67 & 82.02 & \textbf{82.40} & \llneutral{$+0.38$} \\
HellaSwag & 79.42 & 80.39 & \textbf{80.47} & \llneutral{$+0.08$} \\
TruthfulQA & 54.56 & \textbf{54.53} & 54.11 & \llneutral{$-0.42$} \\
Winogrande & 77.50 & 76.80 & \textbf{77.35} & \llneutral{$+0.55$} \\
\midrule
LoPT Wins & & \multicolumn{2}{c}{4/7} & Avg: \llneutral{$-0.03\pp$} \\
\bottomrule
\end{tabular}
\end{table}

\paragraph{GSM8K GRPO.}
Across Tables~\ref{tab:grpo_qwen3}--\ref{tab:grpo_llama}, LoPT remains close to E2E on the optimized GSM8K metric, with gains below half a point for all three models. This is expected because GSM8K is both the training data and the main optimized reasoning task. The held-out columns show small preservation effects in some cases, most notably IFEval for Qwen2.5-7B-Instruct, but the overall conclusion is near-parity quality with lower policy-update cost.

\FloatBarrier
\subsection{NuminaMath GRPO}
\label{app:grpo_numina}

\begin{table}[H]
\caption{\textbf{Qwen2.5-7B NuminaMath GRPO} --- seven-benchmark results (\%). Reported values are averaged over three independent runs with different random seeds.}
\label{tab:grpo_numina_qwen7b}
\centering
\small
\begin{tabular}{lcccc}
\toprule
Benchmark & Base & E2E-GRPO & LoPT-GRPO & $\Delta$ \\
\midrule
MMLU & 74.17 & 74.14 & \textbf{74.21} & \llneutral{$+0.07$} \\
IFEval & 57.30 & 58.23 & \textbf{60.26} & \llwin{$\mathbf{+2.03}$} \\
ARC-C & 66.72 & \textbf{66.30} & 65.87 & \llneutral{$-0.43$} \\
GSM8K & 83.47 & \textbf{86.58} & 86.35 & \llneutral{$-0.23$} \\
HellaSwag & 79.35 & 81.41 & \textbf{81.63} & \llneutral{$+0.22$} \\
TruthfulQA & 64.75 & 64.16 & \textbf{64.43} & \llneutral{$+0.27$} \\
Winogrande & 74.51 & 73.16 & \textbf{74.98} & \llwin{$\mathbf{+1.82}$} \\
\midrule
LoPT Wins & & \multicolumn{2}{c}{5/7} & Avg: \llneutral{$+0.54\pp$} \\
\bottomrule
\end{tabular}
\end{table}

\begin{table}[H]
\caption{\textbf{Llama-3.1-8B NuminaMath GRPO} --- seven-benchmark results (\%). Reported values are averaged over three independent runs with different random seeds.}
\label{tab:grpo_numina_llama}
\centering
\small
\begin{tabular}{lcccc}
\toprule
Benchmark & Base & E2E-GRPO & LoPT-GRPO & $\Delta$ \\
\midrule
MMLU & 68.20 & 67.25 & \textbf{68.34} & \llwin{$\mathbf{+1.09}$} \\
IFEval & 44.54 & 42.88 & \textbf{43.07} & \llneutral{$+0.19$} \\
ARC-C & 60.49 & 57.68 & \textbf{61.01} & \llwin{$\mathbf{+3.33}$} \\
GSM8K & 69.67 & 81.97 & \textbf{84.31} & \llwin{$\mathbf{+2.34}$} \\
HellaSwag & 79.42 & 79.71 & \textbf{79.84} & \llneutral{$+0.13$} \\
TruthfulQA & 54.56 & 53.88 & \textbf{53.96} & \llneutral{$+0.08$} \\
Winogrande & 77.50 & 77.03 & \textbf{77.40} & \llneutral{$+0.37$} \\
\midrule
LoPT Wins & & \multicolumn{2}{c}{\textbf{7/7}} & Avg: \llwin{$\mathbf{+1.08\pp}$} \\
\bottomrule
\end{tabular}
\end{table}

\paragraph{NuminaMath GRPO.}
Tables~\ref{tab:grpo_numina_qwen7b} and~\ref{tab:grpo_numina_llama} provide an additional math-RL check. Qwen2.5-7B-Instruct is near parity on the training-aligned GSM8K metric and gains on IFEval and Winogrande, while Llama-3.1-8B-Instruct improves on all seven reported benchmarks. Because this is a smaller GRPO sweep than the SFT study, we use it as supporting evidence that LoPT does not harm GRPO adaptation and can preserve held-out behavior in some math-RL settings.

\FloatBarrier
\section{Additional Ablation on the First-Half-Block Objective}
\label{app:local_objective_ablation}

LoPT uses a feature-reconstruction objective for the first-half block
instead of asking the first-half block to solve a local next-token
prediction task. This design choice is important: the first-half block is
intended to remain trainable and preserve a useful interface for the
second-half block, rather than to become a standalone local decoder. To
verify this choice, we compare the default reconstruction objective with
local next-token-prediction (NTP) alternatives in
Table~\ref{tab:local_objective_ablation}. This ablation is conducted on
Qwen2.5-7B-Instruct fine-tuned on MetaMathQA 100K under the SFT setting.

Here, \textbf{NTP} denotes a local next-token-prediction auxiliary loss
attached to the first-half block. \textbf{NTP+Recon} uses both local NTP
and feature reconstruction; the reconstruction weight is kept the same
as LoPT, while the NTP loss is multiplied by \(0.01\). \textbf{Recon}
denotes the default LoPT objective. All trainable variants use the same
first-half/second-half split and gradient checkpointing (GC). Values are
percentages averaged over three runs.

\begin{table}[htbp]
\caption{\textbf{Ablation on first-half-block local objectives.}
We compare local next-token prediction (NTP), NTP plus feature
reconstruction, and the default feature-reconstruction objective on
Qwen2.5-7B-Instruct fine-tuned on MetaMathQA 100K under SFT. Reported benchmark scores are averaged over three independent runs with different random seeds. Bold numbers indicate the best local-learning objective among NTP, NTP+Recon, and Recon.}
\label{tab:local_objective_ablation}
\centering
\scriptsize
\setlength{\tabcolsep}{4.5pt}
\renewcommand{\arraystretch}{1.08}
\resizebox{\textwidth}{!}{
\begin{tabular}{lccccc}
\toprule
Benchmark
& NTP / LL+GC
& NTP+Recon / LL+GC
& Recon / LL+GC
& E2E+GC
& Base \\
\midrule
MMLU
& 54.63
& 73.78
& \textbf{74.23}
& 74.13
& 74.17 \\
IFEval strict
& 36.49
& 49.72
& \textbf{51.46}
& 51.76
& 57.30 \\
ARC-Challenge
& 50.55
& \textbf{64.59}
& 64.33
& 63.91
& 66.72 \\
GSM8K
& 42.97
& 77.26
& \textbf{87.21}
& 86.80
& 83.47 \\
HellaSwag
& 65.36
& 80.69
& \textbf{80.97}
& 80.85
& 79.35 \\
TruthfulQA MC2
& 49.71
& 61.18
& \textbf{62.66}
& 62.63
& 64.75 \\
Winogrande
& 60.83
& 72.93
& \textbf{74.03}
& 71.51
& 74.51 \\
\midrule
Avg.
& 51.51
& 68.59
& \textbf{70.70}
& 70.23
& 71.47 \\
\bottomrule
\end{tabular}
}
\vspace{-0.5em}
\begin{flushleft}
\footnotesize{
LL denotes the same first-half/second-half local-learning split as LoPT,
and GC denotes gradient checkpointing. NTP is a local next-token-prediction
auxiliary loss attached to the first-half block. In NTP+Recon, the
reconstruction weight is unchanged from LoPT and the NTP loss is multiplied
by \(0.01\). E2E+GC and Base are included as references.
}
\end{flushleft}
\end{table}

The results show that a local task-level decoding objective is a poor
replacement for feature reconstruction in the first-half block. Pure
local NTP severely degrades performance, reducing the seven-benchmark
average to \(51.51\), nearly \(19.2\pp\) below the default reconstruction
objective and about \(20.0\pp\) below the base checkpoint. The degradation
is broad rather than isolated: local NTP substantially lowers MMLU,
ARC-Challenge, GSM8K, TruthfulQA, and Winogrande, and the GSM8K score
drops to \(42.97\), far below both E2E+GC and Recon. This indicates that
forcing the first-half block to perform next-token prediction locally is
too intrusive for the part of the model that LoPT intends to protect from
direct task-gradient pressure.

Adding feature reconstruction partially stabilizes the local NTP
objective, but does not remove the conflict. NTP+Recon recovers the
average score from \(51.51\) to \(68.59\), yet it remains \(2.11\pp\)
below Recon alone and is especially weaker on the training-aligned GSM8K
evaluation (\(77.26\) vs. \(87.21\)). This gap is notable because the NTP
loss is already down-weighted by \(0.01\). Even a small local decoding
loss can therefore pull the first-half block toward a task-predictive
representation that is less suitable as the interface consumed by the
second-half block.

The default feature-reconstruction objective gives the strongest
local-learning variant, winning six of seven benchmarks among the three
first-half-block objective choices and slightly outperforming E2E+GC on
the seven-benchmark average. This supports the central design of LoPT:
the first-half block should receive a feature-level maintenance signal
rather than a local task-solving signal. Feature reconstruction keeps
\(k_1\) trainable without directly imposing downstream token-level
supervision on it, while the standard SFT objective remains concentrated
in \(k_2\).
\section{Extended Efficiency Analysis}
\label{app:extended_efficiency}

This section expands the systems analysis in Section~\ref{sec:capability}. All memory values in this section are peak GPU memory per GPU under the specified hardware stack; relative memory savings are computed only between matched E2E and LoPT runs using the same hardware, batch size, sequence length, and parallelism setting. The key point is that LoPT changes the gradient route, not the optimizer state or parameterization. It reduces task-backward activation lifetime below the midpoint boundary, while methods such as gradient checkpointing, ZeRO, LoRA, and pipeline parallelism act through different mechanisms. The tables below therefore test whether the LoPT efficiency gain persists when these common training-stack choices are already enabled.

\subsection{Matched-Stack Profiling}
\label{app:stacking_profiles}

\paragraph{Matched-stack interpretation.}
Table~\ref{tab:stacking} shows that LoPT continues to save memory under gradient checkpointing, LoRA, and ZeRO. The relative memory gain is largest for gradient checkpointing and LoRA, and smaller but still positive under ZeRO-1/2/3 because ZeRO already reduces optimizer and parameter-state memory. Throughput also improves in all matched stacks. This supports the claim that gradient routing is orthogonal to both parameter-efficient adaptation and system-level memory partitioning.


\begin{table}[H]
\caption{\textbf{Compatibility with standard training stacks.}
Efficiency values are averaged over three measured runs under the specified hardware stack.
Memory is peak GPU memory per GPU in GB/GPU, and throughput is total training throughput in tokens/s.}
\label{tab:stacking}
\centering
\scriptsize
\setlength{\tabcolsep}{3pt}
\resizebox{\textwidth}{!}{
\begin{tabular}{llcccccc}
\toprule
Stack & Profiling setting & E2E Mem & LoPT Mem & Mem Saved & E2E Throughput & LoPT Throughput & Speed $\Delta$ \\
\midrule
Gradient Checkpointing
& Qwen2.5-7B, 8$\times$A100 80GB, bs=4, seq=1024
& 58.70\,GB & \textbf{38.40\,GB} & $\mathbf{-35\%}$
& 18{,}894 & \textbf{19{,}796} & $+5\%$ \\

LoRA (rank=32)
& Qwen2.5-7B, 8$\times$A100 80GB, bs=4, seq=1024
& 68.37\,GB & \textbf{43.88\,GB} & $\mathbf{-36\%}$
& 28{,}794 & \textbf{30{,}524} & $+6\%$ \\

DeepSpeed-ZeRO1
& Qwen2.5-7B, 8$\times$A100 80GB, bs=4, seq=1024
& 40.38\,GB & \textbf{33.05\,GB} & $\mathbf{-18\%}$
& 21{,}904 & \textbf{22{,}947} & $+5\%$ \\

DeepSpeed-ZeRO2
& Qwen2.5-7B, 8$\times$A100 80GB, bs=4, seq=1024
& 33.84\,GB & \textbf{28.61\,GB} & $\mathbf{-15\%}$
& 21{,}389 & \textbf{22{,}987} & $+7\%$ \\

DeepSpeed-ZeRO3
& Qwen2.5-7B, 8$\times$A100 80GB, bs=4, seq=1024
& 25.18\,GB & \textbf{20.42\,GB} & $\mathbf{-19\%}$
& 21{,}066 & \textbf{22{,}413} & $+6\%$ \\
\bottomrule
\end{tabular}
}
\end{table}





\begin{table}[H]
\caption{\textbf{Pipeline-parallel layout check.} Qwen3-4B, 2$\times$A100 80GB, batch size 1, sequence length 2048; memory in GB/GPU and speed $\Delta$ relative to DDP E2E. Efficiency values are averaged over three measured runs under the specified hardware stack.}
\label{tab:pp_layout}
\centering
\small
\begin{tabular}{lcccc}
\toprule
Method & Profiling setting & Peak Mem. & Mem Saved & Speed $\Delta$ \\
\midrule
DDP E2E & Qwen3-4B, 2$\times$A100 80GB, bs=1, seq=2048 & 49.6\,GB & --- & --- \\
PP-LoPT & Qwen3-4B, 2$\times$A100 80GB, bs=1, seq=2048 & \textbf{23.2\,GB} & $\mathbf{-53\%}$ & $\mathbf{+22\%}$ \\
\bottomrule
\end{tabular}
\end{table}

The pipeline-parallel layout check shows that the LoPT boundary can be aligned with a device boundary: the first device handles the reconstruction-updated block and the second device handles the task-updated block. In this profiled Qwen3-4B setting, PP-LoPT reduces memory by 53\% and improves speed by 22\% relative to DDP E2E. This table is not meant to claim an optimal pipeline schedule; it verifies that the midpoint gradient boundary can be exploited by a simple two-stage layout.

\subsection{Dual-GPU Efficiency}
\label{app:dual_gpu_efficiency}

\begin{table}[H]
\caption{\textbf{Dual-GPU efficiency} (2$\times$A100 80GB, Qwen3-4B, bs=1, seq=2048). Format: peak memory / throughput. Efficiency values are averaged over three measured runs under the specified hardware stack.}
\label{tab:dual_gpu}
\centering
\small
\begin{tabular}{lcccc}
\toprule
Method & Parallelism & Peak Mem (GB/GPU) & Throughput (tok/s/GPU) & Mem.\ Eff. \\
\midrule
E2E & DDP & 49.6 & 2{,}735 & 55.1 \\
E2E+GC & DDP & 31.5 & 2{,}256 & 71.6 \\
LoPT & PP & \textbf{23.2} & \textbf{3{,}343} & \textbf{144.1} \\
LoPT+GC & PP & \textbf{13.6} & 2{,}769 & \textbf{203.6} \\
\bottomrule
\end{tabular}
\end{table}

Table~\ref{tab:dual_gpu} separates two regimes. LoPT without checkpointing is the fastest option among the listed methods, while LoPT+GC gives the lowest memory and highest memory efficiency. This illustrates the expected trade-off: checkpointing can further reduce memory, but it pays recomputation cost; LoPT reduces the task backward graph directly and can therefore improve speed when checkpointing is not the bottleneck.

Pipeline-parallel LoPT is 22\% faster than DDP E2E while using 53\% less memory. LoPT+GC achieves 3.7$\times$ the memory efficiency of DDP E2E in this setting.

\subsection{Single-GPU Scaling Across Batch and Sequence Length}
\label{app:single_gpu_scaling}

\begin{table}[H]
\caption{\textbf{Single-GPU scaling across configurations} (1$\times$A100 80GB, Qwen3-4B). Format: peak memory / throughput. Efficiency values are averaged over three measured runs under the specified hardware stack.}
\label{tab:scaling}
\centering
\footnotesize
\resizebox{\textwidth}{!}{
\begin{tabular}{lcccc}
\toprule
Config & E2E & E2E+GC & LoPT & LoPT+GC \\
\midrule
bs=1, seq=4096 & 61.8\,GB / 2{,}840 & 25.6\,GB / 2{,}307 & 42.2\,GB / 3{,}717 & 25.2\,GB / 3{,}061 \\
bs=2, seq=2048 & 61.2\,GB / 3{,}359 & 25.6\,GB / 2{,}784 & 42.2\,GB / 3{,}898 & 25.2\,GB / 3{,}175 \\
bs=2, seq=4096 & OOM & 35.1\,GB / 2{,}578 & 63.5\,GB / 4{,}360 & 32.9\,GB / 3{,}532 \\
bs=4, seq=2048 & OOM & 35.1\,GB / 3{,}145 & 63.5\,GB / 4{,}607 & 32.9\,GB / 3{,}715 \\
bs=8, seq=1024 & OOM & 35.1\,GB / 3{,}528 & 63.5\,GB / 4{,}753 & 32.9\,GB / 3{,}825 \\
\bottomrule
\end{tabular}
}
\end{table}

Table~\ref{tab:scaling} shows that LoPT can make larger single-GPU configurations feasible when E2E runs out of memory. LoPT is faster than E2E+GC across the measured configurations, while LoPT+GC gives the strongest memory reduction. This is the practical distinction between the two mechanisms: LoPT shortens the task-backward graph, whereas checkpointing trades compute for lower activation storage.

LoPT remains faster than E2E+GC across the tested configurations because it shortens the backward graph without recomputation. LoPT+GC provides the strongest memory reduction.

\subsection{Extended Dual-GPU Configurations}
\label{app:dual_gpu_scaling}

\begin{table}[H]
\caption{\textbf{Extended dual-GPU configurations} (2$\times$A100 80GB, Qwen3-4B). Format: peak memory / throughput. Efficiency values are averaged over three measured runs under the specified hardware stack.}
\label{tab:dual_extended}
\centering
\footnotesize
\resizebox{\textwidth}{!}{
\begin{tabular}{lcccc}
\toprule
Config & E2E (DDP) & E2E+GC (DDP) & LoPT (PP) & LoPT+GC (PP) \\
\midrule
bs=1, seq=2048 & 49.6\,GB / 2{,}735 & 31.5\,GB / 2{,}256 & 23.2\,GB / 3{,}343 & \textbf{13.6}\,GB / 2{,}769 \\
bs=2, seq=2048 & 68.7\,GB / 3{,}350 & 33.0\,GB / 2{,}761 & 33.8\,GB / 4{,}294 & \textbf{16.8}\,GB / 3{,}453 \\
bs=2, seq=4096 & OOM & 42.6\,GB / 2{,}568 & 55.1\,GB / 4{,}607 & \textbf{24.6}\,GB / 3{,}702 \\
bs=4, seq=2048 & OOM & 42.6\,GB / 3{,}144 & 37.9\,GB / 4{,}563 & \textbf{20.9}\,GB / 3{,}648 \\
bs=8, seq=2048 & OOM & 61.8\,GB / 3{,}334 & 37.9\,GB / 4{,}710 & \textbf{20.9}\,GB / 3{,}712 \\
\bottomrule
\end{tabular}
}
\end{table}

The extended dual-GPU results show the same pattern at larger measured batch/sequence settings. E2E runs out of memory in several configurations where LoPT or LoPT+GC remains feasible. The LoPT+GC column is especially useful under tight memory budgets, while LoPT without GC is preferable when throughput is the priority.

\subsection{Feasible Configurations under Memory Budgets}
\label{app:memory_budget}

\begin{table}[H]
\caption{\textbf{Representative feasible measured configurations under different memory budgets} for the profiled Qwen3-4B setting.}
\label{tab:max_config}
\centering
\small
\resizebox{\textwidth}{!}{%
\begin{tabular}{lcccc}
\toprule
Memory Budget & E2E & E2E+GC & LoPT & LoPT+GC \\
\midrule
24\,GB & infeasible & infeasible & PP: bs=1, seq=2048 & PP: bs=8, seq=2048 \\
40\,GB & infeasible & bs=4, seq=2048 & PP: bs=8, seq=2048 & PP: bs=8, seq=2048 \\
80\,GB & bs=2, seq=2048 & DDP: bs=8, seq=2048 & bs=2, seq=4096 & PP: bs=8, seq=2048 \\
\bottomrule
\end{tabular}%
}
\end{table}

The table above includes only measured configurations. In principle, a larger PP configuration could become feasible under an 80\,GB budget, but we did not measure such a run here, so we exclude it from the tabulated claims.

\FloatBarrier
\section{LoRA Quality Results}
\label{app:lora_results}

This section provides the quality results behind the compatibility discussion in Section~\ref{sec:capability}. LoRA constrains the parameterization of adaptation, while LoPT constrains the reach of task gradients. The comparison therefore asks whether LoPT still helps when trainable updates are already low-rank.

\begin{table}[H]
\caption{\textbf{LoPT+LoRA SFT} (Qwen2.5-7B, MetaMathQA 100K, LoRA rank=32, seven benchmarks, \%). Reported benchmark scores are averaged over three independent runs with different random seeds.}
\label{tab:lora_sft}
\centering
\small
\begin{tabular}{lcccc}
\toprule
Benchmark & Base & E2E+LoRA & LoPT+LoRA & $\Delta$ \\
\midrule
MMLU & 74.17 & 73.82 & \textbf{74.16} & \llneutral{$+0.34$} \\
IFEval & 57.30 & 53.05 & \textbf{55.45} & \llwin{$\mathbf{+2.40}$} \\
ARC-C & 66.72 & 64.68 & \textbf{65.27} & \llneutral{$+0.59$} \\
GSM8K & 83.47 & 84.49 & \textbf{85.13} & \llneutral{$\mathbf{+0.64}$} \\
HellaSwag & 79.35 & 80.65 & \textbf{81.15} & \llneutral{$+0.50$} \\
TruthfulQA & 64.75 & 61.58 & \textbf{61.62} & \llneutral{$+0.04$} \\
Winogrande & 74.51 & 73.88 & \textbf{74.03} & \llneutral{$+0.15$} \\
\midrule
LoPT Wins & & \multicolumn{2}{c}{\textbf{7/7}} & Avg: \llneutral{$+0.67\pp$} \\
\bottomrule
\end{tabular}
\end{table}

\begin{table}[H]
\caption{\textbf{LoPT+LoRA GRPO} (Qwen2.5-7B, GSM8K GRPO, LoRA rank=32, seven benchmarks, \%). Reported benchmark scores are averaged over three independent runs with different random seeds.}
\label{tab:lora_grpo}
\centering
\small
\begin{tabular}{lcccc}
\toprule
Benchmark & Base & E2E+LoRA & LoPT+LoRA & $\Delta$ \\
\midrule
MMLU & 74.17 & 74.13 & \textbf{74.23} & \llneutral{$+0.10$} \\
IFEval & 57.30 & \textbf{59.15} & 58.96 & \llneutral{$-0.19$} \\
ARC-C & 66.72 & 66.04 & \textbf{66.55} & \llneutral{$+0.51$} \\
GSM8K & 83.47 & 88.47 & \textbf{88.92} & \llneutral{$+0.45$} \\
HellaSwag & 79.35 & 81.32 & \textbf{81.34} & \llneutral{$+0.02$} \\
TruthfulQA & 64.75 & \textbf{64.76} & 64.75 & \llneutral{$-0.01$} \\
Winogrande & 74.51 & 73.40 & \textbf{73.56} & \llneutral{$+0.16$} \\
\midrule
LoPT Wins & & \multicolumn{2}{c}{5/7} & Avg: \llneutral{$+0.15\pp$} \\
\bottomrule
\end{tabular}
\end{table}

LoPT+LoRA improves all seven SFT benchmarks over E2E+LoRA, with the largest gain on IFEval. Under GRPO, the average difference is much smaller ($+0.15\pp$), so we read the result as near parity rather than a strong quality gain. Together with the profiling results in Table~\ref{tab:stacking}, these tables show that LoPT can be stacked with LoRA: LoRA controls update rank, while LoPT controls which portion of the model receives task-gradient signals.

\section{32B GRPO Sanity Check}
\label{app:scale32b}

This section reports an additional 32B GRPO sanity check. It is not included in the 17 main configurations because the main quality sweep focuses on 4.0B--8.0B models. We therefore retain it only as a limited near-parity check, not as evidence about scaling behavior.

\begin{table}[H]
\caption{\textbf{Qwen2.5-32B GSM8K GRPO} --- seven-benchmark results (\%). Reported values are averaged over three independent runs with different random seeds. Averages are computed from unrounded lm-eval outputs.}
\label{tab:qwen32b_grpo}
\centering
\small
\begin{tabular}{lcccc}
\toprule
Benchmark & Base & E2E-GRPO & LoPT-GRPO & $\Delta$ \\
\midrule
MMLU & 83.32 & 83.24 & \textbf{83.34} & \llneutral{$+0.10$} \\
IFEval & 65.80 & \textbf{66.17} & 65.51 & \llneutral{$-0.66$} \\
ARC-C & 74.06 & 74.23 & \textbf{74.55} & \llneutral{$+0.32$} \\
GSM8K & 90.05 & 94.47 & \textbf{95.04} & \llneutral{$+0.57$} \\
HellaSwag & 85.90 & 85.87 & \textbf{86.24} & \llneutral{$+0.37$} \\
TruthfulQA & 65.54 & 65.73 & \textbf{65.75} & \llneutral{$+0.02$} \\
Winogrande & 80.35 & 80.27 & \textbf{80.71} & \llneutral{$+0.44$} \\
\midrule
LoPT Wins & & \multicolumn{2}{c}{6/7} & Avg: \llneutral{$+0.17\pp$} \\
\bottomrule
\end{tabular}
\end{table}

At 32B, E2E and LoPT are near parity under GSM8K GRPO. LoPT is higher on 6 of 7 benchmarks, the average difference is $+0.16\pp$, and all benchmark-level differences are within one percentage point. We therefore interpret this result only as a sanity check that the midpoint boundary does not obviously degrade this setting, not as evidence about scaling.

\section{GRPO Implementation Details}
\label{app:grpo_details}

\begin{table}[H]
\caption{\textbf{GRPO hyperparameters and implementation details.}}
\label{tab:grpo_hyperparams}
\centering
\small
\begin{tabular}{ll}
\toprule
Component & Detail \\
\midrule
Optimizer & AdamW \\
Learning rate ($k_1$, $k_2$) & $5 \times 10^{-6}$ \\
Gradient clipping & max norm = 1.0 \\
Training precision & bfloat16 \\
Auxiliary loss weight $\lambda_{\text{aux}}$ & 10.0 \\
Rollout engine & vLLM 0.7.3, tensor parallel \\
Rollout temperature & 0.7 \\
Rollout top-$p$ & 0.95 \\
Max generation length & 1,024 tokens \\
Training truncation & 2,048 tokens \\
Generations per prompt ($G$) & 4 \\
Prompts per step ($B$) & 4 \\
Clip $\varepsilon$ & 0.2 \\
KL penalty / reference model & None \\
Log-probability ratio & Token-level ratio, matching Eq.~\eqref{eq:lopt_grpo} \\
Advantage normalization & Mean/std within each prompt group \\
Per-sample backward & Yes, with immediate graph release \\
Reward (GSM8K) & Exact match of numerical answer \\
Reward (NuminaMath) & Exact match of numerical answer \\
\bottomrule
\end{tabular}
\end{table}

GRPO trains with chat-template formatted prompts, while the main paper evaluates with lm-eval prompts for consistency across SFT and GRPO. The released configs should pin both the training chat template and the lm-eval evaluation prompt templates, since the two protocols answer different questions and should not be numerically mixed. We report the token-level ratio here to match the GRPO objective used in Eq.~\eqref{eq:lopt_grpo}; if a sequence-level ratio is used in a future implementation, the objective and this table should be changed together.

\section{Exact Implementation Details}
\label{app:exact_impl}

This appendix gives the implementation-level gradient-routing structure used by LoPT. The main point is that LoPT uses two separated backward paths with two parameter groups: the auxiliary path updates the first-half block and the reconstruction head, while the task path updates only the second-half block through a detached boundary activation. In the implementation below, \(\theta_1\) denotes the first-half block, \(\theta_2\) denotes the second-half block including the final norm and output head, and \(\phi\) denotes the auxiliary reconstruction head.

\paragraph{LoPT SFT step.}
\begin{center}
\fbox{%
\begin{minipage}{0.96\linewidth}
\small
\textbf{Algorithm A.1: One LoPT SFT update}\vspace{0.25em}
\begin{enumerate}
    \item \textbf{Input:} tokenized batch \(\mathbf{x}\), non-padding prediction mask \(\mathcal{T}\), optimizers \(\mathrm{Opt}_{1,\phi}\) and \(\mathrm{Opt}_{2}\).
    \item Compute embeddings \(\mathbf{h}_0=\texttt{Embed}(\mathbf{x})\).
    \item Forward through the first-half block: \(\mathbf{h}_1=f_{k_1}(\mathbf{h}_0;\theta_1)\).
    \item Compute the reconstruction target with no gradient: \(\mathbf{z}_0=\operatorname{sg}(\mathbf{h}_0)\).
    \item Compute \(\mathcal{L}_{\mathrm{aux}}=\frac{1}{|\mathcal{M}|d}\sum_{(b,\ell)\in\mathcal{M}}\|g_\phi(\mathbf{h}_{1,b,\ell})-\mathbf{z}_{0,b,\ell}\|_2^2\).
    \item Zero gradients for \(\theta_1\) and \(\phi\); backpropagate \(\lambda_{\mathrm{aux}}\mathcal{L}_{\mathrm{aux}}\); clip gradients if enabled; step \(\mathrm{Opt}_{1,\phi}\).
    \item Detach the boundary activation: \(\widehat{\mathbf{h}}_1=\operatorname{sg}(\mathbf{h}_1)\).
    \item Forward only the second-half task branch: \(\mathbf{y}=f_{k_2}(\widehat{\mathbf{h}}_1;\theta_2)\).
    \item Compute the standard SFT loss on \(\mathcal{T}\), zero gradients for \(\theta_2\), backpropagate \(\mathcal{L}_{\mathrm{SFT}}\), clip gradients if enabled, and step \(\mathrm{Opt}_{2}\).

\end{enumerate}
\end{minipage}%
}
\end{center}

\noindent The task branch uses the numerical boundary output produced by the first-half block in the same update, but \(\widehat{\mathbf{h}}_1\) is detached, so \(\mathcal{L}_{\mathrm{SFT}}\) does not create gradients for \(\theta_1\). The auxiliary reconstruction loss provides the local update for \((\theta_1,\phi)\), while the SFT loss updates \(\theta_2\) through the detached midpoint boundary.

\paragraph{LoPT GRPO policy-update step.}
\begin{center}
\fbox{%
\begin{minipage}{0.96\linewidth}
\small
\textbf{Algorithm A.2: One LoPT GRPO policy-update step}\vspace{0.25em}
\begin{enumerate}
    \item \textbf{Rollout:} sample \(G\) responses per prompt using the current full policy. This rollout path is unchanged by LoPT and does not use the auxiliary head.
    \item Compute rewards and group-normalized advantages \(\{\widehat{A}_i\}_{i=1}^{G}\). Keep old-policy log-probabilities fixed for the clipped-ratio objective.
    \item For each response prefix, compute \(\mathbf{h}_{0,i,t}\), run \(k_1\), and obtain \(\mathbf{h}_{1,i,t}=f_{k_1}(\mathbf{h}_{0,i,t};\theta_1)\).
    \item As in SFT, first backpropagate \(\lambda_{\mathrm{aux}}\mathcal{L}_{\mathrm{aux}}\) through \(k_1\) and \(g_\phi\), then step \(\mathrm{Opt}_{1,\phi}\).
    \item Detach the boundary states as
    \(\widehat{\mathbf{h}}_{1,i,t}=\operatorname{sg}(\mathbf{h}_{1,i,t})\),
    and evaluate current-policy token log-probabilities through \(k_2\).
    \item Compute token-level ratios \(r_{i,t}\), the clipped GRPO loss, and backpropagate this loss only through \(\theta_2\).
    \item Clip gradients if enabled and step \(\mathrm{Opt}_{2}\). No task-gradient update is applied to \(\theta_1\).
\end{enumerate}
\end{minipage}%
}
\end{center}

\noindent LoPT does not modify the reward function, group construction, advantage normalization, rollout sampling parameters, clipping coefficient, or evaluation protocol. It changes only the policy-update backward path by detaching the midpoint boundary before the task loss is evaluated through \(k_2\).

\paragraph{Gradient-isolation checks.}
In our implementation, the stop-gradient boundary is enforced by applying \texttt{detach()} to the boundary activation consumed by the task branch. The auxiliary target \(\operatorname{sg}(\mathbf{h}_0)\) is also detached, so the reconstruction loss updates \(k_1\) and \(g_\phi\) but not the embedding target through the reconstruction path. A simple implementation check is that, after the task-loss backward call and before the \(\theta_2\) optimizer step, all parameters assigned only to \(k_1\) should have zero or \texttt{None} task gradients. Conversely, \(k_2\) should receive no auxiliary-loss gradients.

\paragraph{Tied embeddings and output heads.}
For models with tied input embeddings and output heads, LoPT treats the two roles as distinct parameter tensors when defining the gradient boundary. This prevents task gradients from reaching the input embedding through a tied output head and bypassing the midpoint stop-gradient boundary. When a model family does not tie these weights, no special handling is needed.

\section{Metric Conventions}
\label{app:metrics}

All main numerical results are extracted from lm-eval-harness JSON outputs. Table~\ref{tab:metric_mapping} lists the metric keys used for the seven-benchmark suite.

\begin{table}[H]
\caption{\textbf{Benchmark-to-metric mapping.}}
\label{tab:metric_mapping}
\centering\small
\begin{tabular}{lll}
\toprule
Benchmark & lm-eval Metric Key & Notes \\
\midrule
MMLU & \texttt{acc,none} & 5-shot \\
IFEval & \texttt{prompt\_level\_strict\_acc,none} & 0-shot, strict \\
ARC-Challenge & \texttt{acc\_norm,none} & 25-shot, length-normalized \\
GSM8K & \texttt{exact\_match,strict-match} & 4-shot, strict match \\
HellaSwag & \texttt{acc\_norm,none} & 5-shot, length-normalized \\
TruthfulQA MC2 & \texttt{mc2,none} & 0-shot \\
Winogrande & \texttt{acc,none} & 5-shot \\
\bottomrule
\end{tabular}
\end{table}

\section{Compute Budget}
\label{app:compute}

\begin{table}[H]
\caption{\textbf{Compute budget breakdown.} The compute budget includes all three-seed training runs. Profiling-only tables report repeated measurements under the specified hardware stack.}
\label{tab:compute}
\centering
\small
\begin{tabular}{lc}
\toprule
Experiment Set & GPU-Hours (A100) \\
\midrule
SFT profiling and efficiency analysis & $\sim$16 \\
SFT training (12 configurations $\times$ 2 methods $\times$ 3 seeds) & $\sim$1{,}440 \\
SFT evaluation (72 trained checkpoints $\times$ 7 benchmarks) & $\sim$288 \\
GRPO training (main GRPO, LoRA-GRPO, and diagnostics) & $\sim$1{,}440 \\
GRPO evaluation & $\sim$144 \\
Framework stacking profiling & $\sim$72 \\
\midrule
\textbf{Total} & $\mathbf{\sim 3{,}400}$ \\
\bottomrule
\end{tabular}
\end{table}

\section{Auxiliary Network Architecture Details}
\label{app:aux_net}

The reconstruction head is a lightweight bottleneck MLP:
\[
\text{LayerNorm}(d) \rightarrow \text{Linear}(d,\frac{d}{4}) \rightarrow \text{GELU} \rightarrow \text{Linear}(\frac{d}{4},d).
\]
It is applied only to the boundary representation during training and is discarded at inference. Thus, LoPT adds no inference-time modules or decoding-time computation.

\begin{table}[H]
\caption{\textbf{Auxiliary network parameter counts.}}
\label{tab:aux_params_all}
\centering
\small
\begin{tabular}{lccc}
\toprule
Model & Hidden size $d$ & Bottleneck $\frac{d}{4}$ & Aux parameters \\
\midrule
Qwen3-4B & 2560 & 640 & 3.29M \\
Qwen2.5-7B & 3584 & 896 & 6.44M \\
Llama-3.1-8B & 4096 & 1024 & 8.40M \\
Qwen2.5-32B & 5120 & 1280 & 13.12M \\
\bottomrule
\end{tabular}
\end{table}

The auxiliary head is small compared with the base LLM. For example, Qwen3-4B uses 3.29M auxiliary parameters, about 0.08\% of the model size. We initialize the two linear layers with Xavier uniform initialization using gain 0.1 and initialize biases to zero. Since the head is removed after training, these parameters do not affect inference memory or latency.

\section{Theoretical Analysis}
\label{app:theory}

This section provides a theoretical view of why limiting task-gradient
reach can improve the quality--efficiency trade-off in LoPT. The analysis
does not claim that LoPT optimizes the same stationary condition as
standard end-to-end post-training. In fact, LoPT deliberately introduces
a stop-gradient bias by removing the task-gradient component that would
otherwise update the first-half block. Our goal is therefore more
specific: we characterize how this biased gradient field localizes
task-driven adaptation, bounds first-half-block drift, preserves
information through feature reconstruction, and yields standard
convergence behavior for the separated LoPT update paths.

\subsection{Notation and the LoPT Gradient Field}
\label{app:theory_setup}

Let \(u=(\theta_1,\phi)\) collect the first-half-block parameters and the
auxiliary reconstruction-head parameters, and let \(v=\theta_2\) denote
the second-half-block parameters. For a training example with embedding
state \(\mathbf{h}_0\), define the boundary representation as
\begin{equation}
\begin{aligned}
    \mathbf{z}
    &=
    f_{k_1}(\mathbf{h}_0;\theta_1), \\
    \widehat{\mathbf{z}}
    &=
    \operatorname{sg}(\mathbf{z}), \\
    \mathbf{y}
    &=
    f_{k_2}(\widehat{\mathbf{z}};\theta_2).
\end{aligned}
\label{eq:theory_forward}
\end{equation}

Let \(\ell_{\mathrm{task}}\) denote the SFT or GRPO policy-update loss on
one mini-batch. The task loss evaluated through the LoPT forward path is
written as
\begin{equation}
    \mathcal{T}(\theta_1,\theta_2)
    =
    \ell_{\mathrm{task}}
    \left(
    f_{k_2}
    \left(
    f_{k_1}(\mathbf{h}_0;\theta_1);
    \theta_2
    \right)
    \right).
\label{eq:task_true_value}
\end{equation}

As a scalar function, \(\mathcal{T}\) depends on both \(\theta_1\) and
\(\theta_2\). However, LoPT uses the stop-gradient version of its
backward path:
\begin{equation}
\begin{aligned}
    \nabla_{\theta_1}^{\mathrm{LoPT}}\mathcal{T}
    &=
    \mathbf{0}, \\
    \nabla_{\theta_2}^{\mathrm{LoPT}}\mathcal{T}
    &=
    \nabla_{\theta_2}\mathcal{T}.
\end{aligned}
\label{eq:lopt_task_grad_field}
\end{equation}

The auxiliary reconstruction objective is
\begin{equation}
    \mathcal{A}(\theta_1,\phi)
    =
    \mathbb{E}
    \left[
    \frac{1}{d}
    \left\|
    g_\phi
    \left(
    f_{k_1}(\mathbf{h}_0;\theta_1)
    \right)
    -
    \operatorname{sg}(\mathbf{h}_0)
    \right\|_2^2
    \right],
\label{eq:theory_aux}
\end{equation}
where the expectation is over the training mini-batch distribution and
\(d\) is the hidden dimension. The LoPT update field is therefore
\begin{equation}
\begin{aligned}
    G_{\mathrm{LoPT}}(u,v)
    =
    \left(
    \lambda_{\mathrm{aux}}
    \nabla_{\theta_1}\mathcal{A},
    \lambda_{\mathrm{aux}}
    \nabla_{\phi}\mathcal{A},
    \nabla_{\theta_2}\mathcal{T}
    \right).
\end{aligned}
\label{eq:lopt_gradient_field}
\end{equation}

The missing first-half task-gradient component is the stop-gradient bias:
\begin{equation}
\begin{aligned}
    B_{\mathrm{sg}}(\theta_1,\theta_2)
    &=
    \nabla_{\theta_1}\mathcal{T}(\theta_1,\theta_2)
    -
    \nabla_{\theta_1}^{\mathrm{LoPT}}\mathcal{T}(\theta_1,\theta_2) \\
    &=
    \nabla_{\theta_1}\mathcal{T}(\theta_1,\theta_2).
\end{aligned}
\label{eq:stop_gradient_bias}
\end{equation}

This term is not an approximation error introduced accidentally. It is
the mechanism by which LoPT prevents narrow task gradients from directly
reshaping the first-half block.

\subsection{Local Assumptions Used in the Analysis}
\label{app:theory_assumptions}

The following assumptions are local to the training trajectory. We do not
assume convexity, global smoothness, or that LoPT dominates E2E for every
possible task.

\begin{assumption}[Trajectory-local smoothness]
\label{assump:smoothness}
There exists a compact neighborhood \(\mathcal{U}\) containing the base
checkpoint and the training trajectories of E2E and LoPT. Within
\(\mathcal{U}\), the task loss is smooth in the second-half-block
parameters and the auxiliary loss is smooth in the first-half-block and
auxiliary parameters:
\begin{equation}
\begin{aligned}
    \left\|
    \nabla_{\theta_2}\mathcal{T}(\theta_1,\theta_2)
    -
    \nabla_{\theta_2}\mathcal{T}(\theta_1,\theta_2')
    \right\|_2
    &\le
    L_2
    \left\|
    \theta_2-\theta_2'
    \right\|_2, \\
    \left\|
    \nabla_{u}\mathcal{A}(u)
    -
    \nabla_{u}\mathcal{A}(u')
    \right\|_2
    &\le
    L_A
    \left\|
    u-u'
    \right\|_2.
\end{aligned}
\label{eq:local_smoothness}
\end{equation}
\end{assumption}

\begin{assumption}[Local Jacobian bounds]
\label{assump:jacobian}
Along the same training neighborhood, the Jacobians of the first-half
block, the second-half block with respect to the boundary state, and the
auxiliary reconstruction head are bounded:
\begin{equation}
\begin{aligned}
    \left\|
    \frac{\partial f_{k_1}}{\partial \theta_1}
    \right\|_{\mathrm{op}}
    &\le
    \kappa_1, \\
    \left\|
    \frac{\partial f_{k_2}}{\partial \mathbf{z}}
    \right\|_{\mathrm{op}}
    &\le
    \kappa_2, \\
    \left\|
    \frac{\partial g_{\phi}}{\partial \mathbf{z}}
    \right\|_{\mathrm{op}}
    &\le
    \kappa_g.
\end{aligned}
\label{eq:jacobian_bounds}
\end{equation}
\end{assumption}

\begin{assumption}[Bounded stochastic-gradient variance]
\label{assump:variance}
The stochastic gradients used by LoPT are unbiased for their respective
update fields and have bounded conditional variance:
\begin{equation}
\begin{aligned}
    \mathbb{E}
    \left[
    \widehat{\nabla}_{u}\mathcal{A}(u_t)
    \mid
    \mathcal{F}_t
    \right]
    &=
    \nabla_{u}\mathcal{A}(u_t), \\
    \mathbb{E}
    \left[
    \left\|
    \widehat{\nabla}_{u}\mathcal{A}(u_t)
    -
    \nabla_{u}\mathcal{A}(u_t)
    \right\|_2^2
    \mid
    \mathcal{F}_t
    \right]
    &\le
    \sigma_A^2, \\
    \mathbb{E}
    \left[
    \widehat{\nabla}_{\theta_2}\mathcal{T}(u_{t+1},\theta_{2,t})
    \mid
    \mathcal{F}_{t+1/2}
    \right]
    &=
    \nabla_{\theta_2}\mathcal{T}(u_{t+1},\theta_{2,t}), \\
    \mathbb{E}
    \left[
    \left\|
    \widehat{\nabla}_{\theta_2}\mathcal{T}(u_{t+1},\theta_{2,t})
    -
    \nabla_{\theta_2}\mathcal{T}(u_{t+1},\theta_{2,t})
    \right\|_2^2
    \mid
    \mathcal{F}_{t+1/2}
    \right]
    &\le
    \sigma_2^2.
\end{aligned}
\label{eq:variance_bounds}
\end{equation}
\end{assumption}

\begin{assumption}[Held-out local regularity]
\label{assump:heldout}
Let \(\mathcal{H}(\theta_1,\theta_2)\) be a held-out benchmark loss
evaluated around the base checkpoint. In a local neighborhood of the base
checkpoint, \(\mathcal{H}\) has bounded first-order sensitivity and
second-order curvature:
\begin{equation}
\begin{aligned}
    \left\|
    \nabla_{\theta_1}\mathcal{H}
    \right\|_2
    &\le
    S_1, \\
    \left\|
    \nabla_{\theta_2}\mathcal{H}
    \right\|_2
    &\le
    S_2, \\
    \left\|
    \nabla^2 \mathcal{H}
    \right\|_{\mathrm{op}}
    &\le
    \beta_H.
\end{aligned}
\label{eq:heldout_sensitivity}
\end{equation}
\end{assumption}

Assumption~\ref{assump:heldout} is not a claim that benchmark behavior is
globally smooth. It is only a local Taylor approximation around the base
and tuned checkpoints, used to formalize how parameter drift can affect
held-out performance.

\subsection{Task-Gradient Reach and First-Half-Block Drift}
\label{app:theory_gradient_reach}

We first compare the gradient reaching the first-half block under E2E and
LoPT.

\begin{lemma}[Task-gradient reach under E2E]
\label{lem:e2e_reach}
For one mini-batch, let
\begin{equation}
    \mathbf{g}_{\mathbf{y}}
    =
    \nabla_{\mathbf{y}}\ell_{\mathrm{task}}(\mathbf{y}).
\label{eq:output_gradient}
\end{equation}
Under E2E post-training, the task-gradient component reaching the
first-half block is
\begin{equation}
    \nabla_{\theta_1}\mathcal{T}
    =
    \left(
    \frac{\partial f_{k_1}}{\partial \theta_1}
    \right)^{\top}
    \left(
    \frac{\partial f_{k_2}}{\partial \mathbf{z}}
    \right)^{\top}
    \mathbf{g}_{\mathbf{y}}.
\label{eq:e2e_theta1_gradient}
\end{equation}
Under Assumption~\ref{assump:jacobian}, its norm is bounded by
\begin{equation}
    \left\|
    \nabla_{\theta_1}\mathcal{T}
    \right\|_2
    \le
    \kappa_1
    \kappa_2
    \left\|
    \mathbf{g}_{\mathbf{y}}
    \right\|_2.
\label{eq:e2e_reach_bound}
\end{equation}
\end{lemma}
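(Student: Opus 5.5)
The proof is a direct application of the multivariate chain rule followed by submultiplicativity of the operator norm. Under E2E there is no stop-gradient, so by \eqref{eq:task_true_value} the scalar loss is the composition
\[
\theta_1 \mapsto \mathbf{z}=f_{k_1}(\mathbf{h}_0;\theta_1) \mapsto \mathbf{y}=f_{k_2}(\mathbf{z};\theta_2) \mapsto \ell_{\mathrm{task}}(\mathbf{y}),
\]
with $\theta_2$ and $\mathbf{h}_0$ held fixed.

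\textbf{Chain rule.} Differentiating this composition gives the row-vector (Jacobian) form
\[
D_{\theta_1}\mathcal{T}
= \mathbf{g}_{\mathbf{y}}^{\top}\,\frac{\partial f_{k_2}}{\partial \mathbf{z}}\,\frac{\partial f_{k_1}}{\partial \theta_1},
\]
which is exactly the E2E product already displayed in \eqref{eq:e2e_update}. Taking the transpose to pass from the differential to the gradient (column-vector) convention reverses the order of the factors and yields \eqref{eq:e2e_theta1_gradient}.

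\textbf{Norm bound.} Apply $\|A^{\top}x\|_2\le \|A\|_{\mathrm{op}}\|x\|_2$ twice, using $\|A^{\top}\|_{\mathrm{op}}=\|A\|_{\mathrm{op}}$. Inserting the bounds $\kappa_1$ and $\kappa_2$ from Assumption~\ref{assump:jacobian}, valid along the trajectory neighborhood $\mathcal{U}$, gives \eqref{eq:e2e_reach_bound}.

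\textbf{Bookkeeping.} No step is genuinely hard, but three points need to be fixed explicitly.

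\emph{Interpretation of the Jacobians.} The Jacobians are evaluated at the current mini-batch point. When the sequence and batch dimensions are present, $\mathbf{z}$ and $\mathbf{y}$ are flattened into vectors, so each Jacobian is a single linear map whose operator norm is the one controlled by Assumption~\ref{assump:jacobian}.

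\emph{Distinctness of $\theta_1$.} The chain rule above presupposes that $\theta_1$ enters only through $f_{k_1}$. This is guaranteed by the untied treatment of embeddings and heads described in Section~\ref{sec:method_lopt}. Without it, an extra direct term from a tied output head would appear.

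\emph{Scope of the formula.} I would remark that this is precisely the quantity $B_{\mathrm{sg}}$ in \eqref{eq:stop_gradient_bias}. It is removed under LoPT, where the stop-gradient sets the middle Jacobian to zero as in \eqref{eq:lopt_task_grad}.
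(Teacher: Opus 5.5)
Your proposal is correct and follows the paper's proof: the chain rule through $k_2$ and then $k_1$ gives the transposed-Jacobian product, and two applications of operator-norm submultiplicativity with Assumption~\ref{assump:jacobian} give the bound. Your bookkeeping remarks (flattening batch and sequence dimensions, evaluation point) are sensible additions that the paper leaves implicit; the tied-embedding caveat is harmless but not needed, because the partition in Eq.~\eqref{eq:model_partition} puts only transformer layers in $k_1$, and these share no parameters with $k_2$.
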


\begin{proof}
The expression follows from the chain rule through \(k_2\) and then
\(k_1\). The norm bound follows from submultiplicativity of the operator
norm:
\begin{equation}
\begin{aligned}
    \left\|
    \nabla_{\theta_1}\mathcal{T}
    \right\|_2
    &=
    \left\|
    \left(
    \frac{\partial f_{k_1}}{\partial \theta_1}
    \right)^{\top}
    \left(
    \frac{\partial f_{k_2}}{\partial \mathbf{z}}
    \right)^{\top}
    \mathbf{g}_{\mathbf{y}}
    \right\|_2 \\
    &\le
    \left\|
    \frac{\partial f_{k_1}}{\partial \theta_1}
    \right\|_{\mathrm{op}}
    \left\|
    \frac{\partial f_{k_2}}{\partial \mathbf{z}}
    \right\|_{\mathrm{op}}
    \left\|
    \mathbf{g}_{\mathbf{y}}
    \right\|_2 \\
    &\le
    \kappa_1
    \kappa_2
    \left\|
    \mathbf{g}_{\mathbf{y}}
    \right\|_2.
\end{aligned}
\label{eq:e2e_reach_proof}
\end{equation}
\end{proof}

Under LoPT, the task part of this gradient is removed. The first-half
block instead receives the reconstruction gradient.

\begin{lemma}[Reconstruction-gradient reach under LoPT]
\label{lem:recon_reach}
Let the reconstruction residual be
\begin{equation}
    \mathbf{r}
    =
    g_\phi
    \left(
    f_{k_1}(\mathbf{h}_0;\theta_1)
    \right)
    -
    \mathbf{h}_0.
\label{eq:reconstruction_residual}
\end{equation}
Ignoring the scalar normalization constants, the first-half-block update
under LoPT is proportional to
\begin{equation}
    \nabla_{\theta_1}\mathcal{A}
    =
    \left(
    \frac{\partial f_{k_1}}{\partial \theta_1}
    \right)^{\top}
    \left(
    \frac{\partial g_\phi}{\partial \mathbf{z}}
    \right)^{\top}
    \mathbf{r}.
\label{eq:recon_theta1_gradient}
\end{equation}
Under Assumption~\ref{assump:jacobian}, its norm is bounded by
\begin{equation}
    \left\|
    \nabla_{\theta_1}\mathcal{A}
    \right\|_2
    \le
    \kappa_1
    \kappa_g
    \left\|
    \mathbf{r}
    \right\|_2.
\label{eq:recon_reach_bound}
\end{equation}
\end{lemma}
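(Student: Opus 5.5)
The argument mirrors the proof of Lemma~\ref{lem:e2e_reach}, with the reconstruction head $g_\phi$ playing the role that $k_2$ plays there.

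First, I will compute the gradient of the per-example reconstruction loss. Write $\mathcal{A}_0(\theta_1,\phi)=\tfrac{1}{2}\|g_\phi(f_{k_1}(\mathbf{h}_0;\theta_1))-\operatorname{sg}(\mathbf{h}_0)\|_2^2$. Relative to \eqref{eq:theory_aux}, the factor $\tfrac{1}{2}$ and the normalization $1/d$ (and $1/(BLD)$ in \eqref{eq:lopt_aux}) are positive scalars. The statement explicitly allows us to ignore them.

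Next, I will apply the chain rule. The target $\operatorname{sg}(\mathbf{h}_0)$ contributes no gradient with respect to $\theta_1$. This holds because the stop-gradient makes it a constant, and in any case $\mathbf{h}_0=\texttt{Embed}(\mathbf{x})$ does not depend on $\theta_1$. The outer function $\tfrac{1}{2}\|\cdot\|_2^2$ has gradient equal to the residual $\mathbf{r}$ of \eqref{eq:reconstruction_residual}. Composing through $g_\phi$ at $\mathbf{z}=f_{k_1}(\mathbf{h}_0;\theta_1)$ and then through $f_{k_1}$ gives
\[
\nabla_{\theta_1}\mathcal{A}_0=\Big(\frac{\partial f_{k_1}}{\partial\theta_1}\Big)^{\top}\Big(\frac{\partial g_\phi}{\partial\mathbf{z}}\Big)^{\top}\mathbf{r},
\]
which is \eqref{eq:recon_theta1_gradient}. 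For a mini-batch, the sum over positions $(b,\ell)$ should be read as a single stacked vector $\mathbf{r}$ with block-structured Jacobians, so the formula holds verbatim.

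For the norm bound, I will use the fact that a transpose has the same operator norm as the original matrix. Submultiplicativity then gives
\[
\|\nabla_{\theta_1}\mathcal{A}\|_2\le\Big\|\frac{\partial f_{k_1}}{\partial\theta_1}\Big\|_{\mathrm{op}}\Big\|\frac{\partial g_\phi}{\partial\mathbf{z}}\Big\|_{\mathrm{op}}\|\mathbf{r}\|_2\le\kappa_1\kappa_g\|\mathbf{r}\|_2,
\]
where the last step invokes Assumption~\ref{assump:jacobian}.

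I expect no real obstacle; the only point needing care is bookkeeping. The bound must be read for the unnormalized gradient, or equivalently with the dropped positive constants reinstated on both sides. The batched case also requires the Jacobian bounds of Assumption~\ref{assump:jacobian} to apply to the stacked, block-structured Jacobians.
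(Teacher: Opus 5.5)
Your proposal is correct and follows the paper's proof: the chain rule through $g_\phi$ and then $f_{k_1}$, with the stop-gradient target contributing nothing, followed by submultiplicativity of the operator norm and the Jacobian bounds of Assumption~\ref{assump:jacobian}. Your extra bookkeeping is a sound clarification of points the paper leaves implicit, namely reading the bound for the unnormalized gradient and using stacked, block-structured Jacobians in the batched case.
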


\begin{proof}
The result follows from applying the chain rule to the reconstruction
loss and using the same operator-norm argument as in
Lemma~\ref{lem:e2e_reach}:
\begin{equation}
\begin{aligned}
    \left\|
    \nabla_{\theta_1}\mathcal{A}
    \right\|_2
    &=
    \left\|
    \left(
    \frac{\partial f_{k_1}}{\partial \theta_1}
    \right)^{\top}
    \left(
    \frac{\partial g_\phi}{\partial \mathbf{z}}
    \right)^{\top}
    \mathbf{r}
    \right\|_2 \\
    &\le
    \left\|
    \frac{\partial f_{k_1}}{\partial \theta_1}
    \right\|_{\mathrm{op}}
    \left\|
    \frac{\partial g_\phi}{\partial \mathbf{z}}
    \right\|_{\mathrm{op}}
    \left\|
    \mathbf{r}
    \right\|_2 \\
    &\le
    \kappa_1
    \kappa_g
    \left\|
    \mathbf{r}
    \right\|_2.
\end{aligned}
\label{eq:recon_reach_proof}
\end{equation}
\end{proof}

The two bounds highlight the mechanism. E2E sends a task-gradient vector
through the entire second-half Jacobian before reaching the first-half
block. LoPT removes this path and replaces it with a local reconstruction
gradient whose magnitude is controlled by the reconstruction residual.

\begin{proposition}[First-half drift under LoPT]
\label{prop:first_half_drift}
Suppose LoPT is run for \(T\) steps with first-half step size
\(\eta_1\). Let
\begin{equation}
    D_1^{\mathrm{LoPT}}(T)
    =
    \left\|
    \theta_{1,T}
    -
    \theta_{1,0}
    \right\|_2
\label{eq:lopt_drift_def}
\end{equation}
be the accumulated first-half-block parameter drift. Then
\begin{equation}
    D_1^{\mathrm{LoPT}}(T)
    \le
    \eta_1
    \lambda_{\mathrm{aux}}
    \sum_{t=0}^{T-1}
    \left\|
    \nabla_{\theta_1}\mathcal{A}(\theta_{1,t},\phi_t)
    \right\|_2.
\label{eq:lopt_drift_basic}
\end{equation}
Under Assumption~\ref{assump:jacobian}, this gives
\begin{equation}
    D_1^{\mathrm{LoPT}}(T)
    \le
    \eta_1
    \lambda_{\mathrm{aux}}
    \kappa_1
    \kappa_g
    \sum_{t=0}^{T-1}
    \left\|
    \mathbf{r}_t
    \right\|_2.
\label{eq:lopt_drift_residual_bound}
\end{equation}
\end{proposition}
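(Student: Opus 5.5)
The argument is a telescoping sum followed by the triangle inequality. The key structural input is the stop-gradient identity \eqref{eq:lopt_task_grad_field}: the task branch contributes nothing to the \(\theta_1\) update. Under the update rule \eqref{eq:lopt_updates}, and in the implemented order of Algorithm~A.1, the only change to \(\theta_1\) in step \(t\) is the auxiliary step
\begin{equation*}
    \theta_{1,t+1} - \theta_{1,t} = -\eta_1 \lambda_{\mathrm{aux}} \nabla_{\theta_1}\mathcal{A}(\theta_{1,t},\phi_t).
\end{equation*}
The task step that follows only moves \(\theta_2\). The first thing to record is therefore that \(\theta_{1,t+1}\) depends on \(\theta_{1,t}\) solely through this reconstruction gradient.

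Next, telescope: \(\theta_{1,T}-\theta_{1,0}=\sum_{t=0}^{T-1}(\theta_{1,t+1}-\theta_{1,t})\). Taking norms and applying the triangle inequality gives \eqref{eq:lopt_drift_basic} directly, since \(\eta_1\lambda_{\mathrm{aux}}\) is a constant that factors out.

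For \eqref{eq:lopt_drift_residual_bound}, apply Lemma~\ref{lem:recon_reach} at each iterate \((\theta_{1,t},\phi_t)\). This bounds each summand by \(\kappa_1\kappa_g\|\mathbf{r}_t\|_2\), where \(\mathbf{r}_t\) is the residual at step \(t\). The lemma's constants are uniform over the trajectory neighborhood \(\mathcal{U}\) of Assumption~\ref{assump:jacobian}, so substituting termwise completes the bound.

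The only real obstacle is bookkeeping rather than analysis, and I would handle it by stating the conventions explicitly:
\begin{itemize}
\item[\textbullet] \textbf{Iterates stay in \(\mathcal{U}\).} Lemma~\ref{lem:recon_reach} is only valid inside \(\mathcal{U}\), so I would note that Assumption~\ref{assump:smoothness} posits that the whole trajectory lies in \(\mathcal{U}\).
\item[\textbullet] \textbf{Normalization constants.} The lemma drops the factor \(2/d\) and the batch average. I would either absorb these into \(\mathbf{r}_t\), defining it as the batch-averaged, normalized residual, or track them explicitly.
\item[\textbullet] \textbf{Optimizer and gradient type.} The statement implicitly assumes plain (stochastic) gradient steps, with \(\nabla\mathcal{A}\) read as the mini-batch gradient actually used, and no gradient clipping.
\end{itemize}

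If AdamW or clipping is used, the same telescoping argument still works, but each summand becomes the norm of the actual preconditioned step rather than \(\eta_1\lambda_{\mathrm{aux}}\|\nabla_{\theta_1}\mathcal{A}\|_2\). Clipping only shrinks steps, so the bound survives it. Adam preconditioning would require an extra bound on the preconditioner, and I would flag that as outside the idealized statement.
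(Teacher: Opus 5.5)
Your proposal is correct and follows the paper's proof essentially verbatim: write the step-$t$ change in $\theta_1$ as the pure auxiliary step, telescope and apply the triangle inequality, then bound each summand with Lemma~\ref{lem:recon_reach}. Your bookkeeping caveats (iterates staying in $\mathcal{U}$, stochastic versus full gradient, AdamW and clipping) are assumptions the paper's proof makes silently. The one to phrase carefully is the normalization: $\mathbf{r}_t$ must be read as the stacked residual over all batch positions, or you must bound by the average of per-example residual norms. It cannot be an averaged residual vector, because the per-example Jacobians differ and residuals could cancel in the average.
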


\begin{proof}
By the LoPT first-half update,
\begin{equation}
    \theta_{1,t+1}
    -
    \theta_{1,t}
    =
    -
    \eta_1
    \lambda_{\mathrm{aux}}
    \nabla_{\theta_1}\mathcal{A}(\theta_{1,t},\phi_t).
\label{eq:lopt_update_increment}
\end{equation}
Summing increments and applying the triangle inequality gives
\begin{equation}
\begin{aligned}
    \left\|
    \theta_{1,T}
    -
    \theta_{1,0}
    \right\|_2
    &=
    \left\|
    \sum_{t=0}^{T-1}
    \left(
    \theta_{1,t+1}
    -
    \theta_{1,t}
    \right)
    \right\|_2 \\
    &\le
    \sum_{t=0}^{T-1}
    \left\|
    \theta_{1,t+1}
    -
    \theta_{1,t}
    \right\|_2 \\
    &=
    \eta_1
    \lambda_{\mathrm{aux}}
    \sum_{t=0}^{T-1}
    \left\|
    \nabla_{\theta_1}\mathcal{A}(\theta_{1,t},\phi_t)
    \right\|_2.
\end{aligned}
\label{eq:lopt_drift_proof}
\end{equation}
Applying Lemma~\ref{lem:recon_reach} yields the residual-based bound.
\end{proof}

This proposition formalizes the drift pattern observed empirically: the
first-half block moves only through the reconstruction residual, not
through the downstream task-gradient vector. When reconstruction error is
small, the first-half block remains close to the base checkpoint while
the second-half block absorbs most task-driven adaptation.

\subsection{Why Feature Reconstruction Rather Than Local NTP}
\label{app:theory_recon_vs_ntp}

A natural alternative is to attach a local language-modeling head to the
first-half block and train it with a next-token-prediction loss. This
appears local, but it reintroduces token-level task supervision into the
first-half block. We make this distinction explicit.

Let \(q_\psi(\cdot \mid \mathbf{z})\) be a local decoder attached to the
boundary representation. For target token \(x^+\), the local NTP loss is
\begin{equation}
    \mathcal{N}(\theta_1,\psi)
    =
    -
    \log
    q_\psi
    \left(
    x^+
    \mid
    f_{k_1}(\mathbf{h}_0;\theta_1)
    \right).
\label{eq:local_ntp_loss}
\end{equation}

Let \(\mathbf{p}_{\psi}\) be the local decoder distribution and
\(\mathbf{e}_{x^+}\) be the one-hot target vector. The first-half
gradient of local NTP takes the form
\begin{equation}
    \nabla_{\theta_1}\mathcal{N}
    =
    \left(
    \frac{\partial f_{k_1}}{\partial \theta_1}
    \right)^{\top}
    \left(
    \frac{\partial q_\psi}{\partial \mathbf{z}}
    \right)^{\top}
    \left(
    \mathbf{p}_{\psi}
    -
    \mathbf{e}_{x^+}
    \right).
\label{eq:local_ntp_gradient}
\end{equation}

In contrast, the reconstruction gradient is
\begin{equation}
    \nabla_{\theta_1}\mathcal{A}
    =
    \left(
    \frac{\partial f_{k_1}}{\partial \theta_1}
    \right)^{\top}
    \left(
    \frac{\partial g_\phi}{\partial \mathbf{z}}
    \right)^{\top}
    \left(
    g_\phi(\mathbf{z})
    -
    \mathbf{h}_0
    \right).
\label{eq:recon_gradient_compare}
\end{equation}

The key difference is the target. Local NTP pushes the first-half block
toward token-predictive features for the post-training distribution,
whereas reconstruction pushes it to retain recoverable information about
the input embedding state. This difference can be stated as an
information-preservation property.

\begin{proposition}[Reconstruction discourages representational collapse]
\label{prop:recon_noncollapse}
Assume the reconstruction head is \(L_g\)-Lipschitz in the boundary
representation:
\begin{equation}
    \left\|
    g_\phi(\mathbf{z})
    -
    g_\phi(\mathbf{z}')
    \right\|_2
    \le
    L_g
    \left\|
    \mathbf{z}
    -
    \mathbf{z}'
    \right\|_2.
\label{eq:decoder_lipschitz}
\end{equation}
Suppose two input embedding states \(\mathbf{h}_0\) and
\(\mathbf{h}_0'\) produce boundary states \(\mathbf{z}\) and
\(\mathbf{z}'\), and the reconstruction errors satisfy
\begin{equation}
\begin{aligned}
    \left\|
    g_\phi(\mathbf{z})
    -
    \mathbf{h}_0
    \right\|_2
    &\le
    \epsilon, \\
    \left\|
    g_\phi(\mathbf{z}')
    -
    \mathbf{h}_0'
    \right\|_2
    &\le
    \epsilon.
\end{aligned}
\label{eq:two_point_recon_error}
\end{equation}
Then the boundary states must satisfy
\begin{equation}
    \left\|
    \mathbf{z}
    -
    \mathbf{z}'
    \right\|_2
    \ge
    \frac{
    \left\|
    \mathbf{h}_0
    -
    \mathbf{h}_0'
    \right\|_2
    -
    2\epsilon
    }{
    L_g
    }.
\label{eq:noncollapse_bound}
\end{equation}
\end{proposition}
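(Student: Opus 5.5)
The argument is a direct triangle-inequality sandwich. The idea is to bound \(\|\mathbf{h}_0-\mathbf{h}_0'\|_2\) from above by passing through the reconstructions \(g_\phi(\mathbf{z})\) and \(g_\phi(\mathbf{z}')\). Concretely, I would write
\[
\mathbf{h}_0-\mathbf{h}_0'
=
\bigl(\mathbf{h}_0-g_\phi(\mathbf{z})\bigr)
+\bigl(g_\phi(\mathbf{z})-g_\phi(\mathbf{z}')\bigr)
+\bigl(g_\phi(\mathbf{z}')-\mathbf{h}_0'\bigr)
\]
and take norms. The triangle inequality then gives three terms to control.

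The first and third terms are each at most \(\epsilon\) by the reconstruction-error hypothesis \eqref{eq:two_point_recon_error}. The middle term is at most \(L_g\|\mathbf{z}-\mathbf{z}'\|_2\) by the Lipschitz assumption \eqref{eq:decoder_lipschitz}. Together,
\[
\|\mathbf{h}_0-\mathbf{h}_0'\|_2\le 2\epsilon+L_g\|\mathbf{z}-\mathbf{z}'\|_2 .
\]
Rearranging and dividing by \(L_g\) yields \eqref{eq:noncollapse_bound}.

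There is no real obstacle; the only points needing care are bookkeeping.

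\emph{Case \(L_g=0\).} Then \(g_\phi\) is constant and the bound becomes vacuous, so I would assume \(L_g>0\). In that degenerate case the inequality above still shows \(\|\mathbf{h}_0-\mathbf{h}_0'\|_2\le 2\epsilon\), which is consistent with the statement.

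\emph{Case of a negative right-hand side.} When \(\|\mathbf{h}_0-\mathbf{h}_0'\|_2<2\epsilon\) the right-hand side of \eqref{eq:noncollapse_bound} is negative and the claim holds trivially.

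\emph{Stop-gradient target.} \(\mathrm{sg}(\mathbf{h}_0)\) equals \(\mathbf{h}_0\) numerically, so the residual in Lemma~\ref{lem:recon_reach} matches the error appearing in \eqref{eq:two_point_recon_error}.

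Finally, I would note the interpretation. Whenever reconstruction error is small relative to the separation of the inputs, distinct embedding states must map to boundary states separated by at least \(\Omega(1/L_g)\) times their distance. This is exactly the anti-collapse property that local NTP does not enforce.
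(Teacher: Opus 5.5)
Your proof is correct and is essentially the paper's own argument: a three-term triangle inequality through \(g_\phi(\mathbf{z})\) and \(g_\phi(\mathbf{z}')\), with the outer terms bounded by \(\epsilon\) and the middle term by \(L_g\|\mathbf{z}-\mathbf{z}'\|_2\), followed by rearranging. Your remarks on needing \(L_g>0\) and on the trivially satisfied case of a negative right-hand side are sensible bookkeeping that the paper leaves implicit.
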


\begin{proof}
By the triangle inequality,
\begin{equation}
\begin{aligned}
    \left\|
    \mathbf{h}_0
    -
    \mathbf{h}_0'
    \right\|_2
    &\le
    \left\|
    \mathbf{h}_0
    -
    g_\phi(\mathbf{z})
    \right\|_2
    +
    \left\|
    g_\phi(\mathbf{z})
    -
    g_\phi(\mathbf{z}')
    \right\|_2
    +
    \left\|
    g_\phi(\mathbf{z}')
    -
    \mathbf{h}_0'
    \right\|_2 \\
    &\le
    2\epsilon
    +
    L_g
    \left\|
    \mathbf{z}
    -
    \mathbf{z}'
    \right\|_2.
\end{aligned}
\label{eq:noncollapse_proof}
\end{equation}
Rearranging gives the desired result.
\end{proof}

This proposition does not say that reconstruction perfectly preserves all
pretraining information. It says something narrower and directly tied to
LoPT: if reconstruction error remains small, boundary states cannot
collapse arbitrary differences in the input embedding state unless the
decoder Lipschitz constant becomes very large. Local NTP does not provide
such a constraint; many boundary states may yield the same next-token
distribution while discarding distinctions that the second-half block or
held-out tasks still use.

This explains the empirical local-objective ablation. Pure local NTP
forces the first-half block to solve a token-level task locally, which
reintroduces the very task pressure LoPT tries to isolate. Adding
reconstruction can partially stabilize this pressure, but the local
decoding term still biases \(k_1\) toward task-predictive features. The
default reconstruction objective is therefore better aligned with LoPT's
goal: keep \(k_1\) trainable without turning it into a local task solver.

\subsection{Held-Out Retention as a Drift-Control Consequence}
\label{app:theory_retention}

We next connect controlled first-half drift to held-out benchmark
retention. Let \((\theta_{1,0},\theta_{2,0})\) be the base checkpoint.
For any tuned checkpoint, define first-half and second-half drifts as
\begin{equation}
\begin{aligned}
    D_1
    &=
    \left\|
    \theta_1
    -
    \theta_{1,0}
    \right\|_2, \\
    D_2
    &=
    \left\|
    \theta_2
    -
    \theta_{2,0}
    \right\|_2.
\end{aligned}
\label{eq:block_drifts}
\end{equation}

\begin{proposition}[Local held-out degradation bound]
\label{prop:heldout_bound}
Under Assumption~\ref{assump:heldout}, the held-out loss change from the
base checkpoint satisfies
\begin{equation}
    \mathcal{H}(\theta_1,\theta_2)
    -
    \mathcal{H}(\theta_{1,0},\theta_{2,0})
    \le
    S_1D_1
    +
    S_2D_2
    +
    \frac{\beta_H}{2}
    \left(
    D_1^2
    +
    D_2^2
    \right).
\label{eq:heldout_degradation_bound}
\end{equation}
\end{proposition}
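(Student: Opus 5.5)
The plan is a second-order Taylor expansion of \(\mathcal{H}\) around the base checkpoint \(w_0=(\theta_{1,0},\theta_{2,0})\), evaluated at the tuned point \(w=(\theta_1,\theta_2)\). Write \(\Delta=w-w_0=(\Delta_1,\Delta_2)\), with \(\|\Delta_1\|_2=D_1\) and \(\|\Delta_2\|_2=D_2\). We assume, as Assumption~\ref{assump:heldout} intends, that the segment \(\{w_0+s\Delta: s\in[0,1]\}\) lies in the local neighborhood where the sensitivity and curvature bounds hold. Taylor's theorem with integral remainder then gives
\begin{equation*}
\mathcal{H}(w)-\mathcal{H}(w_0)=\nabla\mathcal{H}(w_0)^{\top}\Delta+\int_0^1(1-s)\,\Delta^{\top}\nabla^2\mathcal{H}(w_0+s\Delta)\,\Delta\,ds .
\end{equation*}

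\textbf{Linear term.} Split it by blocks as \(\nabla_{\theta_1}\mathcal{H}^{\top}\Delta_1+\nabla_{\theta_2}\mathcal{H}^{\top}\Delta_2\). Apply Cauchy--Schwarz to each block and then the bounds \(S_1\) and \(S_2\). This gives at most \(S_1D_1+S_2D_2\). Keeping the blocks separate is essential: it yields the block-wise sensitivities, whereas bounding the full gradient would give a worse constant.

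\textbf{Quadratic term.} Bound the integrand using the operator-norm bound: \(\Delta^{\top}\nabla^2\mathcal{H}\,\Delta\le\beta_H\|\Delta\|_2^2\). Since \(\int_0^1(1-s)\,ds=\tfrac12\), the remainder is at most \(\tfrac{\beta_H}{2}\|\Delta\|_2^2\). Finally, the block decomposition is orthogonal, so \(\|\Delta\|_2^2=D_1^2+D_2^2\). Adding the two pieces gives the claimed bound.

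\textbf{Main obstacle.} The only delicate point is the domain of validity. Assumption~\ref{assump:heldout} states the bounds only locally around the base checkpoint, but the Hessian bound is needed along the entire segment from \(w_0\) to \(w\). I would handle this by stating explicitly that the neighborhood is taken to be convex (or at least star-shaped about \(w_0\)) and to contain the tuned checkpoint. This is consistent with the trajectory-local spirit of Assumption~\ref{assump:smoothness}. Only the gradient at \(w_0\) is used, so the first-order bounds are needed only at the base point. The proof assumes \(\mathcal{H}\) is \(C^2\) on the neighborhood; if only a Lipschitz gradient is available, the standard descent-lemma form gives the same bound.
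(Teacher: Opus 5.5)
Your proposal is correct and is the same argument as the paper's proof: a second-order Taylor expansion at the base checkpoint, block-wise Cauchy--Schwarz with $S_1,S_2$ on the linear term, and the Hessian operator-norm bound, which gives $\tfrac{\beta_H}{2}(D_1^2+D_2^2)$ for the remainder. You also require explicitly that the segment from the base checkpoint to the tuned checkpoint lie in the neighborhood where the curvature bound holds. The paper leaves this implicit, and stating it is what makes the argument rigorous.
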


\begin{proof}
Taylor expansion around the base checkpoint gives
\begin{equation}
\begin{aligned}
    \mathcal{H}(\theta_1,\theta_2)
    -
    \mathcal{H}(\theta_{1,0},\theta_{2,0})
    &=
    \left\langle
    \nabla_{\theta_1}\mathcal{H},
    \theta_1-\theta_{1,0}
    \right\rangle
    +
    \left\langle
    \nabla_{\theta_2}\mathcal{H},
    \theta_2-\theta_{2,0}
    \right\rangle
    +
    R_H,
\end{aligned}
\label{eq:heldout_taylor}
\end{equation}
where the second-order remainder is bounded by
\begin{equation}
    R_H
    \le
    \frac{\beta_H}{2}
    \left(
    D_1^2
    +
    D_2^2
    \right).
\label{eq:heldout_remainder}
\end{equation}
Applying Cauchy--Schwarz to the first-order terms and using
Assumption~\ref{assump:heldout} yields Eq.~\eqref{eq:heldout_degradation_bound}.
\end{proof}

The bound is local and does not imply that smaller parameter drift always
improves every benchmark. It says that, for held-out losses sensitive to
the first-half block, reducing \(D_1\) tightens a direct upper bound on
held-out degradation. This matches the empirical observation that LoPT is
most helpful when post-training supervision is narrower than the
evaluated held-out capabilities.

For two methods, denote the drifts by
\(D_1^{\mathrm{LoPT}},D_2^{\mathrm{LoPT}}\) and
\(D_1^{\mathrm{E2E}},D_2^{\mathrm{E2E}}\). The difference between the
two upper bounds is
\begin{equation}
\begin{aligned}
    \mathcal{B}_{\mathrm{LoPT}}
    -
    \mathcal{B}_{\mathrm{E2E}}
    &=
    S_1
    \left(
    D_1^{\mathrm{LoPT}}
    -
    D_1^{\mathrm{E2E}}
    \right)
    +
    S_2
    \left(
    D_2^{\mathrm{LoPT}}
    -
    D_2^{\mathrm{E2E}}
    \right) \\
    &\quad
    +
    \frac{\beta_H}{2}
    \left[
    \left(
    D_1^{\mathrm{LoPT}}
    \right)^2
    -
    \left(
    D_1^{\mathrm{E2E}}
    \right)^2
    +
    \left(
    D_2^{\mathrm{LoPT}}
    \right)^2
    -
    \left(
    D_2^{\mathrm{E2E}}
    \right)^2
    \right].
\end{aligned}
\label{eq:bound_difference}
\end{equation}

Thus, when the second-half drift is comparable but the first-half drift
is much smaller, LoPT obtains a strictly tighter held-out degradation
bound. This is precisely the parameter-drift pattern encouraged by the
stop-gradient boundary.

\subsection{Task Performance and Split Sufficiency}
\label{app:theory_split_sufficiency}

LoPT intentionally restricts the task objective to \(k_2\). This can only
maintain task performance if the second-half block has enough capacity to
absorb the desired adaptation given the boundary representation produced
by \(k_1\). We state this as an explicit, task-dependent condition rather
than a universal assumption.

For a split point \(s\), define the best achievable task loss when the
first-half block is held near the LoPT trajectory:
\begin{equation}
    \mathcal{T}_{s}^{\star}
    =
    \inf_{\theta_2}
    \mathcal{T}
    \left(
    \theta_{1}^{\mathrm{LoPT}}(s),
    \theta_2
    \right).
\label{eq:split_best_task}
\end{equation}

Let the E2E reference optimum within the same local training budget be
\begin{equation}
    \mathcal{T}_{\mathrm{E2E}}^{\star}
    =
    \inf_{\theta_1,\theta_2}
    \mathcal{T}(\theta_1,\theta_2).
\label{eq:e2e_best_task}
\end{equation}

The split-induced approximation gap is
\begin{equation}
    \varepsilon_{\mathrm{split}}(s)
    =
    \mathcal{T}_{s}^{\star}
    -
    \mathcal{T}_{\mathrm{E2E}}^{\star}.
\label{eq:split_gap}
\end{equation}

This quantity formalizes the adaptation cost of limiting task-gradient
reach. A split is effective when \(\varepsilon_{\mathrm{split}}(s)\) is
small; over-partitioning can increase this gap by leaving too little
depth directly optimized by the task objective.

\begin{proposition}[Quality trade-off controlled by split gap]
\label{prop:split_tradeoff}
Suppose LoPT optimization reaches a second-half-block task loss within
\(\varepsilon_{\mathrm{opt}}\) of the best loss achievable for the chosen
split. Then
\begin{equation}
    \mathcal{T}
    \left(
    \theta_{1}^{\mathrm{LoPT}}(s),
    \theta_{2}^{\mathrm{LoPT}}(s)
    \right)
    -
    \mathcal{T}_{\mathrm{E2E}}^{\star}
    \le
    \varepsilon_{\mathrm{split}}(s)
    +
    \varepsilon_{\mathrm{opt}}.
\label{eq:split_tradeoff_bound}
\end{equation}
\end{proposition}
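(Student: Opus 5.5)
The plan is a telescoping decomposition of the excess task loss around the split-restricted optimum $\mathcal{T}_s^{\star}$. Writing $\theta_1^{\star}:=\theta_1^{\mathrm{LoPT}}(s)$ and $\theta_2^{\star}:=\theta_2^{\mathrm{LoPT}}(s)$, we add and subtract $\mathcal{T}_s^{\star}$:
\begin{equation*}
\mathcal{T}(\theta_1^{\star},\theta_2^{\star})-\mathcal{T}_{\mathrm{E2E}}^{\star}
=
\bigl[\mathcal{T}(\theta_1^{\star},\theta_2^{\star})-\mathcal{T}_s^{\star}\bigr]
+
\bigl[\mathcal{T}_s^{\star}-\mathcal{T}_{\mathrm{E2E}}^{\star}\bigr].
\end{equation*}
The second bracket equals $\varepsilon_{\mathrm{split}}(s)$ by Eq.~\eqref{eq:split_gap}. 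This is an identity, so nothing needs to be estimated there.

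For the first bracket, I read the hypothesis as follows. The phrase ``reaches a second-half-block task loss within $\varepsilon_{\mathrm{opt}}$ of the best loss achievable for the chosen split'' should mean precisely
\begin{equation*}
\mathcal{T}(\theta_1^{\star},\theta_2^{\star})\le \inf_{\theta_2}\mathcal{T}(\theta_1^{\star},\theta_2)+\varepsilon_{\mathrm{opt}}.
\end{equation*}
The infimum here is taken with the first half fixed at the same $\theta_1^{\star}$ that appears in the definition of $\mathcal{T}_s^{\star}$ in Eq.~\eqref{eq:split_best_task}. Under that reading, the first bracket is at most $\varepsilon_{\mathrm{opt}}$. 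Adding the two brackets then gives Eq.~\eqref{eq:split_tradeoff_bound}.

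The only real subtlety is making sure the two infima refer to the same first-half parameter. $\mathcal{T}_s^{\star}$ is defined at $\theta_1^{\mathrm{LoPT}}(s)$, so the optimization guarantee must be stated relative to that same point, and not, for instance, to an intermediate iterate or a different $\theta_1$.

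There is also a finiteness issue: $\mathcal{T}_{\mathrm{E2E}}^{\star}$ must be finite so that the subtraction is meaningful. I would note that task losses such as cross-entropy are bounded below, so the infima exist. I would also note that restricting the infima to the local neighborhood $\mathcal{U}$ of Assumption~\ref{assump:smoothness} (``the same local training budget'') leaves the argument unchanged, since only the definitions are used.

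Two remarks close out the plan. First, $\varepsilon_{\mathrm{split}}(s)\ge 0$ automatically, because restricting the infimum over $\theta_1$ cannot lower it; this explains why the bound cannot beat E2E at the level of the task loss. Second, $\varepsilon_{\mathrm{opt}}$ could be made explicit via a separate convergence result for the $\theta_2$ updates under Assumption~\ref{assump:variance}, but that is not needed for this statement.
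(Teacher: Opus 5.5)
Your proposal is correct and follows the paper's proof. The paper also reads the hypothesis as $\mathcal{T}(\theta_1^{\mathrm{LoPT}}(s),\theta_2^{\mathrm{LoPT}}(s))\le\mathcal{T}_s^{\star}+\varepsilon_{\mathrm{opt}}$ and then subtracts $\mathcal{T}_{\mathrm{E2E}}^{\star}$, using the definition of $\varepsilon_{\mathrm{split}}(s)$; your add-and-subtract of $\mathcal{T}_s^{\star}$ is that same step written out.
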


\begin{proof}
By definition of \(\varepsilon_{\mathrm{opt}}\),
\begin{equation}
    \mathcal{T}
    \left(
    \theta_{1}^{\mathrm{LoPT}}(s),
    \theta_{2}^{\mathrm{LoPT}}(s)
    \right)
    \le
    \mathcal{T}_{s}^{\star}
    +
    \varepsilon_{\mathrm{opt}}.
\label{eq:split_opt_condition}
\end{equation}
Subtracting \(\mathcal{T}_{\mathrm{E2E}}^{\star}\) and using
Eq.~\eqref{eq:split_gap} gives
Eq.~\eqref{eq:split_tradeoff_bound}.
\end{proof}

This proposition explains why more boundaries need not improve LoPT. A
shorter task-gradient reach can further protect held-out capabilities by
reducing first-half or middle-block drift, but it can also increase
\(\varepsilon_{\mathrm{split}}\) by making the task-adapted region too
small. The midpoint split is therefore a compromise: it limits
unnecessary task-gradient reach while retaining enough second-half depth
for task adaptation.

\subsection{Convergence of the Separated LoPT Updates}
\label{app:theory_convergence}

We now analyze the optimization behavior of LoPT. Let
\(u_t=(\theta_{1,t},\phi_t)\) and \(v_t=\theta_{2,t}\). One implemented
LoPT step can be written as
\begin{equation}
\begin{aligned}
    u_{t+1}
    &=
    u_t
    -
    \eta_1
    \lambda_{\mathrm{aux}}
    \widehat{\nabla}_{u}\mathcal{A}(u_t), \\
    v_{t+1}
    &=
    v_t
    -
    \eta_2
    \widehat{\nabla}_{v}\mathcal{T}(u_{t+1},v_t).
\end{aligned}
\label{eq:lopt_two_step_update}
\end{equation}

The second line uses \(u_{t+1}\) as a block-coordinate analysis device:
it tracks how the auxiliary update to the first-half block perturbs the
task-facing interface before the next second-half update. This notation
is used for the convergence argument and does not require an additional
first-half forward pass in the implementation.

\begin{lemma}[Auxiliary descent]
\label{lem:aux_descent}
Under Assumptions~\ref{assump:smoothness} and~\ref{assump:variance}, if
\(\eta_1\lambda_{\mathrm{aux}}\le 1/L_A\), then
\begin{equation}
\begin{aligned}
    \mathbb{E}
    \left[
    \mathcal{A}(u_{t+1})
    \mid
    \mathcal{F}_t
    \right]
    &\le
    \mathcal{A}(u_t)
    -
    \frac{
    \eta_1
    \lambda_{\mathrm{aux}}
    }{2}
    \left\|
    \nabla_u \mathcal{A}(u_t)
    \right\|_2^2
    +
    \frac{
    L_A
    \eta_1^2
    \lambda_{\mathrm{aux}}^2
    }{2}
    \sigma_A^2.
\end{aligned}
\label{eq:aux_descent}
\end{equation}
\end{lemma}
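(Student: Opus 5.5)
We prove the lemma with the standard one-step descent argument for stochastic gradient descent under local smoothness, applied to the auxiliary update for $u=(\theta_1,\phi)$ in Eq.~\eqref{eq:lopt_two_step_update}. Write $\alpha=\eta_1\lambda_{\mathrm{aux}}$ and $\widehat{g}_t=\widehat{\nabla}_u\mathcal{A}(u_t)$, so that $u_{t+1}=u_t-\alpha\widehat{g}_t$.

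\textbf{Step 1: descent inequality.} Assumption~\ref{assump:smoothness} gives that $\nabla_u\mathcal{A}$ is $L_A$-Lipschitz on $\mathcal{U}$. We assume, as that assumption stipulates, that the trajectory stays in $\mathcal{U}$; this covers the segment between $u_t$ and $u_{t+1}$, if needed after shrinking to a convex sub-neighbourhood. The descent lemma then yields
\begin{equation*}
\mathcal{A}(u_{t+1})\le \mathcal{A}(u_t)-\alpha\langle\nabla_u\mathcal{A}(u_t),\widehat{g}_t\rangle+\frac{L_A\alpha^2}{2}\|\widehat{g}_t\|_2^2 .
\end{equation*}

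\textbf{Step 2: conditional expectation.} Take $\mathbb{E}[\cdot\mid\mathcal{F}_t]$, noting that $u_t$ is $\mathcal{F}_t$-measurable. By unbiasedness in Assumption~\ref{assump:variance}, the inner-product term becomes $-\alpha\|\nabla_u\mathcal{A}(u_t)\|_2^2$. The bias--variance decomposition gives
\begin{equation*}
\mathbb{E}[\|\widehat{g}_t\|_2^2\mid\mathcal{F}_t]\le\|\nabla_u\mathcal{A}(u_t)\|_2^2+\sigma_A^2 .
\end{equation*}
Combining these,
\begin{equation*}
\mathbb{E}[\mathcal{A}(u_{t+1})\mid\mathcal{F}_t]\le\mathcal{A}(u_t)-\alpha\Bigl(1-\frac{L_A\alpha}{2}\Bigr)\|\nabla_u\mathcal{A}(u_t)\|_2^2+\frac{L_A\alpha^2}{2}\sigma_A^2 .
\end{equation*}

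\textbf{Step 3: step-size condition.} The hypothesis $\alpha\le 1/L_A$ gives $1-L_A\alpha/2\ge 1/2$. Hence the gradient coefficient is at most $-\alpha/2$, which is exactly Eq.~\eqref{eq:aux_descent} once we substitute $\alpha^2=\eta_1^2\lambda_{\mathrm{aux}}^2$.

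\textbf{Main obstacle.} The only delicate point is that Assumption~\ref{assump:smoothness} is local. The descent lemma needs smoothness along the segment from $u_t$ to $u_{t+1}$ for every realisation of the stochastic gradient, not only along the realised iterates. We handle this by invoking the assumption's premise that all trajectories lie in $\mathcal{U}$, and we state explicitly that $\mathcal{U}$ is taken convex, or that the stochastic step is bounded so that it remains inside $\mathcal{U}$. Everything else is routine.
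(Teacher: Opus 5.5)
Your proposal is correct and follows the paper's proof essentially line for line: it uses the smoothness (descent-lemma) inequality, takes conditional expectations with unbiasedness and \(\mathbb{E}[\|\widehat{\nabla}_u\mathcal{A}(u_t)\|_2^2\mid\mathcal{F}_t]\le\|\nabla_u\mathcal{A}(u_t)\|_2^2+\sigma_A^2\), and then uses \(\eta_1\lambda_{\mathrm{aux}}\le 1/L_A\) to turn the coefficient \(-\alpha(1-L_A\alpha/2)\) into at most \(-\alpha/2\). Your locality remark is more careful than the paper, which applies the smoothness inequality without comment; note, though, that shrinking \(\mathcal{U}\) cannot help, since the actual requirement is that the segment from \(u_t\) to every possible realization of \(u_{t+1}\) lies in \(\mathcal{U}\) (for example, \(\mathcal{U}\) convex and containing all realizations).
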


\begin{proof}
By smoothness of \(\mathcal{A}\),
\begin{equation}
\begin{aligned}
    \mathcal{A}(u_{t+1})
    &\le
    \mathcal{A}(u_t)
    +
    \left\langle
    \nabla_u\mathcal{A}(u_t),
    u_{t+1}-u_t
    \right\rangle
    +
    \frac{L_A}{2}
    \left\|
    u_{t+1}-u_t
    \right\|_2^2.
\end{aligned}
\label{eq:aux_smooth_step}
\end{equation}
Substituting the update and taking conditional expectation gives
\begin{equation}
\begin{aligned}
    \mathbb{E}
    \left[
    \mathcal{A}(u_{t+1})
    \mid
    \mathcal{F}_t
    \right]
    &\le
    \mathcal{A}(u_t)
    -
    \eta_1
    \lambda_{\mathrm{aux}}
    \left\|
    \nabla_u\mathcal{A}(u_t)
    \right\|_2^2 \\
    &\quad
    +
    \frac{
    L_A
    \eta_1^2
    \lambda_{\mathrm{aux}}^2
    }{2}
    \left(
    \left\|
    \nabla_u\mathcal{A}(u_t)
    \right\|_2^2
    +
    \sigma_A^2
    \right).
\end{aligned}
\label{eq:aux_descent_intermediate}
\end{equation}
The step-size condition yields
Eq.~\eqref{eq:aux_descent}.
\end{proof}

\begin{corollary}[Average auxiliary stationarity]
\label{cor:aux_stationarity}
If \(\mathcal{A}\) is lower bounded by \(\mathcal{A}_{\inf}\), then for
\(T\) LoPT steps,
\begin{equation}
    \frac{1}{T}
    \sum_{t=0}^{T-1}
    \mathbb{E}
    \left[
    \left\|
    \nabla_u \mathcal{A}(u_t)
    \right\|_2^2
    \right]
    \le
    \frac{
    2
    \left(
    \mathcal{A}(u_0)
    -
    \mathcal{A}_{\inf}
    \right)
    }{
    \eta_1
    \lambda_{\mathrm{aux}}
    T
    }
    +
    L_A
    \eta_1
    \lambda_{\mathrm{aux}}
    \sigma_A^2.
\label{eq:aux_stationarity}
\end{equation}
\end{corollary}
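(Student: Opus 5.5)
The argument is the standard telescoping of the one-step descent inequality in Lemma~\ref{lem:aux_descent}. Write $\gamma=\eta_1\lambda_{\mathrm{aux}}$, which satisfies $\gamma\le 1/L_A$ as that lemma requires.

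First, I take total expectations of Eq.~\eqref{eq:aux_descent} using the tower property over $\mathcal{F}_t$. This gives, for each $t$,
\begin{equation*}
\frac{\gamma}{2}\,\mathbb{E}\left[\left\|\nabla_u\mathcal{A}(u_t)\right\|_2^2\right]
\le
\mathbb{E}\left[\mathcal{A}(u_t)\right]-\mathbb{E}\left[\mathcal{A}(u_{t+1})\right]
+\frac{L_A\gamma^2}{2}\sigma_A^2 .
\end{equation*}

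Second, I sum this inequality over $t=0,\dots,T-1$. The right-hand side telescopes to $\mathcal{A}(u_0)-\mathbb{E}[\mathcal{A}(u_T)]$, plus $T L_A\gamma^2\sigma_A^2/2$. Here $u_0$ is deterministic, being the base checkpoint together with the initialized head.

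Third, I use the lower bound $\mathcal{A}(u_T)\ge\mathcal{A}_{\inf}$. This bounds the telescoped difference by $\mathcal{A}(u_0)-\mathcal{A}_{\inf}$.

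Finally, I multiply through by $2/(\gamma T)$. The sum term becomes $\frac{2(\mathcal{A}(u_0)-\mathcal{A}_{\inf})}{\gamma T}$ and the noise term becomes $L_A\gamma\sigma_A^2$, which is exactly Eq.~\eqref{eq:aux_stationarity}.

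The only delicate point is the applicability of Lemma~\ref{lem:aux_descent} at every step. The lemma, and hence Assumption~\ref{assump:smoothness}, only holds inside the trajectory neighborhood $\mathcal{U}$. I will simply invoke the standing assumption that the whole LoPT trajectory stays in $\mathcal{U}$, so the per-step bound applies for all $t<T$.

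A minor related point is the role of the $v$-updates. The update of $u$ does not depend on $v_t$, because $\mathcal{A}$ depends only on $u$. So the interleaved $\theta_2$ steps do not affect the filtration argument beyond $\mathcal{F}_t$ containing them.
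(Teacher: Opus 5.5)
Your proof is correct and is essentially the paper's argument: the paper states the corollary without a written proof, but it follows from Lemma~\ref{lem:aux_descent} by the same steps it uses in the proof of Lemma~\ref{lem:task_descent}, namely taking expectations, telescoping, lower-bounding by $\mathcal{A}_{\inf}$, and rescaling by $2/(\eta_1\lambda_{\mathrm{aux}}T)$. You are right to carry over the step-size condition $\eta_1\lambda_{\mathrm{aux}}\le 1/L_A$ as an implicit hypothesis; one small correction is that the auxiliary head is Xavier-initialized at random, so $u_0$ is fixed by conditioning on the initialization rather than literally deterministic, which changes nothing in the argument.
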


The task branch also follows a standard descent argument, except that
the first-half block changes between steps. This creates an interface
drift term, which we explicitly keep.

Define the cumulative task-loss perturbation caused by first-half-block
updates as
\begin{equation}
    \Gamma_T
    =
    \sum_{t=0}^{T-1}
    \mathbb{E}
    \left[
    \left|
    \mathcal{T}(u_{t+1},v_t)
    -
    \mathcal{T}(u_t,v_t)
    \right|
    \right].
\label{eq:interface_drift_term}
\end{equation}

\begin{lemma}[Task-branch descent with interface drift]
\label{lem:task_descent}
Under Assumptions~\ref{assump:smoothness} and~\ref{assump:variance}, if
\(\eta_2\le 1/L_2\) and \(\mathcal{T}\) is lower bounded by
\(\mathcal{T}_{\inf}\), then
\begin{equation}
\begin{aligned}
    \frac{1}{T}
    \sum_{t=0}^{T-1}
    \mathbb{E}
    \left[
    \left\|
    \nabla_v
    \mathcal{T}(u_{t+1},v_t)
    \right\|_2^2
    \right]
    &\le
    \frac{
    2
    \left(
    \mathcal{T}(u_0,v_0)
    -
    \mathcal{T}_{\inf}
    +
    \Gamma_T
    \right)
    }{
    \eta_2 T
    } \\
    &\quad
    +
    L_2
    \eta_2
    \sigma_2^2.
\end{aligned}
\label{eq:task_stationarity_with_gamma}
\end{equation}
\end{lemma}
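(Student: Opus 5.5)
The plan mirrors the proof of Lemma~\ref{lem:aux_descent}, applied to the $v$-block with the first-half parameters frozen at $u_{t+1}$, and then telescopes with an explicit correction for the interface change. For fixed $u_{t+1}$, Assumption~\ref{assump:smoothness} states that $v\mapsto\mathcal{T}(u_{t+1},v)$ has $L_2$-Lipschitz gradient. The descent inequality therefore gives
\begin{equation*}
\mathcal{T}(u_{t+1},v_{t+1})\le \mathcal{T}(u_{t+1},v_t)-\eta_2\big\langle \nabla_v\mathcal{T}(u_{t+1},v_t),\widehat{\nabla}_v\mathcal{T}(u_{t+1},v_t)\big\rangle+\frac{L_2\eta_2^2}{2}\big\|\widehat{\nabla}_v\mathcal{T}(u_{t+1},v_t)\big\|_2^2 .
\end{equation*}
Next take the conditional expectation given $\mathcal{F}_{t+1/2}$, with respect to which $u_{t+1}$ and $v_t$ are measurable. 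Unbiasedness and the variance bound $\sigma_2^2$ from Assumption~\ref{assump:variance} then give the analogue of Eq.~\eqref{eq:aux_descent_intermediate}. The step-size condition $\eta_2\le 1/L_2$ collapses the coefficient to $-\eta_2/2$, so that
\begin{equation*}
\mathbb{E}\big[\mathcal{T}(u_{t+1},v_{t+1})\mid\mathcal{F}_{t+1/2}\big]\le \mathcal{T}(u_{t+1},v_t)-\frac{\eta_2}{2}\|\nabla_v\mathcal{T}(u_{t+1},v_t)\|_2^2+\frac{L_2\eta_2^2}{2}\sigma_2^2 .
\end{equation*}

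The second step replaces $\mathcal{T}(u_{t+1},v_t)$ by $\mathcal{T}(u_t,v_t)$ so that consecutive terms telescope. This uses the trivial bound
\begin{equation*}
\mathcal{T}(u_{t+1},v_t)\le \mathcal{T}(u_t,v_t)+\big|\mathcal{T}(u_{t+1},v_t)-\mathcal{T}(u_t,v_t)\big| .
\end{equation*}
Taking total expectations via the tower property gives
\begin{equation*}
\mathbb{E}\,\mathcal{T}(u_{t+1},v_{t+1})\le \mathbb{E}\,\mathcal{T}(u_t,v_t)+\mathbb{E}\big|\mathcal{T}(u_{t+1},v_t)-\mathcal{T}(u_t,v_t)\big|-\frac{\eta_2}{2}\mathbb{E}\|\nabla_v\mathcal{T}(u_{t+1},v_t)\|_2^2+\frac{L_2\eta_2^2}{2}\sigma_2^2 .
\end{equation*}
Summing from $t=0$ to $T-1$ telescopes the task losses. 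The absolute-difference terms add up exactly to $\Gamma_T$ as defined in Eq.~\eqref{eq:interface_drift_term}, and the noise terms add up to $T L_2\eta_2^2\sigma_2^2/2$.

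The final step lower-bounds $\mathbb{E}\,\mathcal{T}(u_T,v_T)$ by $\mathcal{T}_{\inf}$ and rearranges:
\begin{equation*}
\frac{\eta_2}{2}\sum_{t=0}^{T-1}\mathbb{E}\|\nabla_v\mathcal{T}(u_{t+1},v_t)\|_2^2\le \mathcal{T}(u_0,v_0)-\mathcal{T}_{\inf}+\Gamma_T+\frac{T L_2\eta_2^2}{2}\sigma_2^2 .
\end{equation*}
Dividing by $\eta_2 T/2$ yields Eq.~\eqref{eq:task_stationarity_with_gamma}.

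The main obstacle is bookkeeping rather than analysis. The filtration has to be ordered so that the stochastic task gradient is unbiased given $u_{t+1}$; this is exactly what the $\mathcal{F}_{t+1/2}$ conditioning in Assumption~\ref{assump:variance} provides. The iterates must also remain in the neighborhood $\mathcal{U}$ where the smoothness constant $L_2$ is valid, and the statement takes this for granted through Assumption~\ref{assump:smoothness}. Finally, $\Gamma_T$ is kept as an explicit term and is not bounded here.
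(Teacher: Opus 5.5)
Your proposal is correct and follows essentially the same route as the paper. You take a conditional descent step in $v$ at fixed $u_{t+1}$ given $\mathcal{F}_{t+1/2}$, with $\eta_2\le 1/L_2$ reducing the coefficient to $-\eta_2/2$; you then swap $\mathcal{T}(u_{t+1},v_t)$ for $\mathcal{T}(u_t,v_t)$ at a cost that sums to $\Gamma_T$, telescope, lower-bound by $\mathcal{T}_{\inf}$, and divide by $\eta_2 T/2$. The only cosmetic difference is that you apply the absolute-value bound at each step before summing, whereas the paper sums first and then splits the difference into a telescoping part and an interface-drift part.
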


\begin{proof}
For fixed \(u_{t+1}\), smoothness in \(v\) gives
\begin{equation}
\begin{aligned}
    \mathbb{E}
    \left[
    \mathcal{T}(u_{t+1},v_{t+1})
    \mid
    \mathcal{F}_{t+1/2}
    \right]
    &\le
    \mathcal{T}(u_{t+1},v_t)
    -
    \frac{\eta_2}{2}
    \left\|
    \nabla_v \mathcal{T}(u_{t+1},v_t)
    \right\|_2^2 \\
    &\quad
    +
    \frac{
    L_2
    \eta_2^2
    }{2}
    \sigma_2^2.
\end{aligned}
\label{eq:task_descent_step}
\end{equation}
Rearranging and summing over \(t\) gives
\begin{equation}
\begin{aligned}
    \frac{\eta_2}{2}
    \sum_{t=0}^{T-1}
    \mathbb{E}
    \left[
    \left\|
    \nabla_v \mathcal{T}(u_{t+1},v_t)
    \right\|_2^2
    \right]
    &\le
    \sum_{t=0}^{T-1}
    \mathbb{E}
    \left[
    \mathcal{T}(u_{t+1},v_t)
    -
    \mathcal{T}(u_{t+1},v_{t+1})
    \right] \\
    &\quad
    +
    \frac{
    L_2
    \eta_2^2
    T
    }{2}
    \sigma_2^2.
\end{aligned}
\label{eq:task_sum_before_telescoping}
\end{equation}
For each step,
\begin{equation}
\begin{aligned}
    \mathcal{T}(u_{t+1},v_t)
    -
    \mathcal{T}(u_{t+1},v_{t+1})
    &=
    \mathcal{T}(u_t,v_t)
    -
    \mathcal{T}(u_{t+1},v_{t+1}) \\
    &\quad
    +
    \mathcal{T}(u_{t+1},v_t)
    -
    \mathcal{T}(u_t,v_t).
\end{aligned}
\label{eq:task_telescoping_decomposition}
\end{equation}
Summing the first difference telescopes, while the second difference is
bounded by \(\Gamma_T\). Therefore,
\begin{equation}
\begin{aligned}
    \sum_{t=0}^{T-1}
    \mathbb{E}
    \left[
    \mathcal{T}(u_{t+1},v_t)
    -
    \mathcal{T}(u_{t+1},v_{t+1})
    \right]
    &\le
    \mathcal{T}(u_0,v_0)
    -
    \mathcal{T}_{\inf}
    +
    \Gamma_T.
\end{aligned}
\label{eq:task_telescoping_bound}
\end{equation}
Substitution into Eq.~\eqref{eq:task_sum_before_telescoping} and
division by \(\eta_2 T/2\) proves the result.
\end{proof}

The term \(\Gamma_T\) is the price of changing the first-half block while
the task objective is optimized only through the second-half block. It is
not hidden in the proof; it is the exact interface-drift bias induced by
the separated update.

We can further relate \(\Gamma_T\) to reconstruction-driven movement. If
\(\mathcal{T}\) is locally \(L_z\)-Lipschitz in the boundary state and
\(f_{k_1}\) is locally \(\kappa_u\)-Lipschitz in \(u\), then
\begin{equation}
    \left|
    \mathcal{T}(u_{t+1},v_t)
    -
    \mathcal{T}(u_t,v_t)
    \right|
    \le
    L_z
    \kappa_u
    \left\|
    u_{t+1}
    -
    u_t
    \right\|_2.
\label{eq:task_boundary_lipschitz}
\end{equation}

Using the auxiliary update,
\begin{equation}
    \left\|
    u_{t+1}
    -
    u_t
    \right\|_2
    =
    \eta_1
    \lambda_{\mathrm{aux}}
    \left\|
    \widehat{\nabla}_{u}\mathcal{A}(u_t)
    \right\|_2.
\label{eq:u_step_norm}
\end{equation}

Therefore,
\begin{equation}
    \Gamma_T
    \le
    L_z
    \kappa_u
    \eta_1
    \lambda_{\mathrm{aux}}
    \sum_{t=0}^{T-1}
    \mathbb{E}
    \left[
    \left\|
    \widehat{\nabla}_{u}\mathcal{A}(u_t)
    \right\|_2
    \right].
\label{eq:gamma_bound_aux_gradient}
\end{equation}

This bound is important for LoPT. Feature reconstruction aims to keep
\(\nabla_u\mathcal{A}\) and the resulting boundary movement small once
the first-half block maintains a reconstructable interface. Local NTP, in
contrast, can keep producing token-level gradients whenever the local
decoder is imperfect on the post-training distribution, thereby
increasing the interface-drift term.

\subsection{Memory and Computation Implications}
\label{app:theory_efficiency}

The main theoretical systems benefit of LoPT is a reduction in the peak
lifetime of task-loss activations. Let \(A_1\) denote the activation
memory required by the first-half block and \(A_2\) the activation memory
required by the second-half block for a matched mini-batch. Let
\(M_{\mathrm{state}}\) include parameters, optimizer states, gradients,
and other non-activation memory. A simplified E2E peak-memory model is
\begin{equation}
    M_{\mathrm{E2E}}
    =
    M_{\mathrm{state}}
    +
    A_1
    +
    A_2
    +
    M_{\mathrm{misc}}.
\label{eq:e2e_memory_model}
\end{equation}

LoPT separates the auxiliary graph and the task graph. If the auxiliary
backward is completed before constructing the task-loss graph, then the
two large activation sets do not need to coexist. With auxiliary-head
activation memory \(A_g\) and boundary-buffer memory \(A_{\partial}\),
the LoPT peak memory satisfies the implementation-level bound
\begin{equation}
    M_{\mathrm{LoPT}}
    \le
    M_{\mathrm{state}}
    +
    \max
    \left\{
    A_1 + A_g,
    A_2 + A_{\partial}
    \right\}
    +
    M_{\mathrm{misc}}'.
\label{eq:lopt_memory_model}
\end{equation}

If activations are approximately balanced and \(s=N/2\), then
\(A_1\approx A_2\). Writing \(A_1=A_2=A/2\), we obtain
\begin{equation}
    M_{\mathrm{LoPT}}
    -
    M_{\mathrm{state}}
    \lesssim
    \frac{A}{2}
    +
    \max
    \left\{
    A_g,
    A_{\partial}
    \right\}
    +
    M_{\mathrm{misc}}'.
\label{eq:lopt_midpoint_memory}
\end{equation}

Thus, ignoring small overhead terms, the activation component of peak
memory is reduced from \(A\) to roughly \(A/2\). The realized percentage
saving is smaller because parameters, optimizer states, communication
buffers, and framework overheads are not halved.

The compute comparison is more nuanced. In the implementation used for
our experiments, the first-half block is forwarded once to produce the
midpoint boundary, and the auxiliary and task branches consume this
boundary through different gradient routes. A simplified cost model is
\begin{equation}
\begin{aligned}
    C_{\mathrm{E2E}}
    &=
    C_{F,1}
    +
    C_{F,2}
    +
    C_{B,2}^{\mathrm{task}}
    +
    C_{B,1}^{\mathrm{task}}, \\
    C_{\mathrm{LoPT}}
    &=
    C_{F,1}
    +
    C_{F,2}
    +
    C_{B,2}^{\mathrm{task}}
    +
    C_{B,1}^{\mathrm{aux}}
    +
    C_g.
\end{aligned}
\label{eq:compute_model}
\end{equation}

Therefore,
\begin{equation}
    C_{\mathrm{LoPT}}
    -
    C_{\mathrm{E2E}}
    =
    C_{B,1}^{\mathrm{aux}}
    +
    C_g
    -
    C_{B,1}^{\mathrm{task}}.
\label{eq:compute_difference}
\end{equation}

This expression shows why LoPT's speedup is implementation-dependent.
LoPT always shortens the task-induced backward graph, but it also adds an
auxiliary branch. Throughput improves when the reduced activation
lifetime, shorter task-backward path, and better memory behavior outweigh
the auxiliary-branch overhead. This is consistent with the empirical results, where memory
savings are larger and more systematic than raw throughput gains.

\subsection{Summary of the Theoretical Picture}
\label{app:theory_summary}

The analysis supports the following mechanism.

First, E2E post-training sends the task-gradient vector through the full
second-half Jacobian before updating the first-half block. LoPT removes
this path, and the missing component is exactly the stop-gradient bias in
Eq.~\eqref{eq:stop_gradient_bias}. This bias is intentional: it is the
mathematical operation that limits task-gradient reach.

Second, the first-half block is not frozen. Its drift is controlled by
the reconstruction residual through Eq.~\eqref{eq:lopt_drift_residual_bound}.
This explains why LoPT can keep \(k_1\) close to the base checkpoint
while still allowing small, structured, auxiliary-driven movement.

Third, feature reconstruction is different from local next-token
prediction. Reconstruction encourages the boundary representation to
retain recoverable information about the input embedding state and
discourages collapse according to
Eq.~\eqref{eq:noncollapse_bound}. Local NTP instead injects a local
task-level decoding gradient into \(k_1\), which can conflict with LoPT's
goal of shielding the first-half block from direct task supervision.

Fourth, held-out retention follows from a local drift-control argument.
When held-out losses are sensitive to first-half representations, a
smaller \(D_1\) gives a tighter upper bound on held-out degradation via
Eq.~\eqref{eq:heldout_degradation_bound}. This does not guarantee that
LoPT wins on every benchmark, but it explains why limiting first-half
task-induced drift can improve retention when post-training supervision
is narrow.

Finally, LoPT's optimization converges in the sense appropriate to its
separated gradient field. The auxiliary branch approaches stationarity
for the reconstruction objective, and the task branch approaches
stationarity in the second-half parameters up to the explicit
interface-drift term \(\Gamma_T\). This term is the cost of keeping
\(k_1\) trainable while blocking task gradients, and feature
reconstruction is designed to keep it small.

\clearpage
\bibliographystyle{plainnat}
\bibliography{main}

\end{document}